\newtheorem{proposition}{Proposition}
\newtheorem{lemma}[proposition]{Lemma}
\newtheorem{corollary}[proposition]{Corollary}
\newtheorem{theorem}[proposition]{Theorem}
\newcommand{\R}{\mathbb{R}}
\renewcommand{\P}{\mathcal{P}}
\newcommand{\U}{\mathcal{U}}
\newcommand{\X}{\mathcal{X}}
\newcommand{\indicator}[1]{\mathbf{1}\left[{#1}\right]}
\newcommand{\KL}[2]{D\left(#1 \middle\| #2 \right)}
\newcommand{\norm}[1]{\left\| #1 \right\|}
\DeclareMathOperator*{\argmin}{argmin}
\DeclareMathOperator{\err}{err}
\DeclareMathOperator{\Exp}{\mathbf{E}}
\DeclareMathOperator{\VC}{VC}
\begin{document}

\title{The information-theoretic value of unlabeled data in semi-supervised learning}
\author{Alexander Golovnev\thanks{Harvard University, Cambridge, MA, USA. Supported by a Rabin Postdoctoral Fellowship.} \and D\'avid P\'al\thanks{Yahoo Research, New York, NY, USA} \and Bal\'azs Sz\"or\'enyi\footnotemark[2]}

\maketitle

\begin{abstract}
We quantify the separation between the numbers of labeled examples required to
learn in two settings: Settings \emph{with} and \emph{without} the knowledge of
the distribution of the unlabeled data. More specifically, we prove a separation
by $\Theta(\log n)$ multiplicative factor for the class of projections over
the Boolean hypercube of dimension $n$. We prove that there is no separation
for the class of all functions on domain of any size.

Learning with the knowledge of the distribution (a.k.a. \emph{fixed-distribution
learning}) can be viewed as an idealized scenario of semi-supervised learning
where the number of unlabeled data points is so great that the unlabeled
distribution is known exactly. For this reason, we call the separation the
\emph{value of unlabeled data}.
\end{abstract}

\section{Introduction}
\label{section:introduction}

\citet{Hanneke-2016} showed that for any class $C$ of Vapnik-Chervonenkis
dimension $d$ there exists an algorithm that $\epsilon$-learns any target
function from $C$ under any distribution from $O\left(\frac{d +
\log(1/\delta)}{\epsilon}\right)$ labeled examples with probability at least
$1-\delta$. For this paper, it is important to stress that Hanneke's algorithm
does \emph{not} receive the distribution of unlabeled data as input. On the
other hand, \citet{Benedek-Itai-1991} showed that for any class $C$ and any
distribution there exists an algorithm that $\epsilon$-learns any target from
$C$ from $O \left( \frac{\log N_{\epsilon/2} + \log
(1/\delta)}{\epsilon}\right)$ labeled examples with probability at least
$1-\delta$ where $N_{\epsilon/2}$ is the size of an $\frac{\epsilon}{2}$-cover
of $C$ with respect to the disagreement metric $d(f,g) = \Pr[f(x) \neq g(x)]$.
Here, it is important to note that Benedek and Itai construct for each
distribution a separate algorithm. In other words, they construct a family of
algorithms indexed by the (uncountably many) distributions over the domain.
Alternatively, we can think of Benedek-Itai's family of algorithms as a single
algorithm that receives the distribution as an input. It is known that
$N_\epsilon = O(1/\epsilon)^{O(d)}$; see \citet{Dudley-1978}. Thus, ignoring
$\log(1/\epsilon)$ factor, Benedek-Itai bound is never worse than Hanneke's
bound.

As we already mentioned, Benedek-Itai's algorithm receives as input the
distribution of unlabeled data. The algorithm uses it to construct an
$\frac{\epsilon}{2}$-cover. Unsurprisingly, there exist distributions which have
a small $\frac{\epsilon}{2}$-cover and thus sample complexity of Benedek-Itai's
algorithm on such distributions is significantly lower then the Hanneke's bound.
For instance, a distribution concentrated on a single point has an
$\frac{\epsilon}{2}$-cover of size $2$ for any positive $\epsilon$.

However, an algorithm does not need to receive the unlabeled distribution in
order to enjoy low sample complexity. For example, empirical risk minimization
(ERM) algorithm needs significantly less labeled examples to learn any target
under some unlabeled distributions. For instance, if the distribution is
concentrated on a single point, ERM needs only one labeled example to learn any
target. One could be lead to believe that there exists an algorithm that does
\emph{not} receive the unlabeled distribution as input and achieves Benedek-Itai
bound (or a slightly worse bound) for \emph{every} distribution. In fact, one
could think that ERM or Hanneke's algorithm could be such algorithms. If ERM,
Hanneke's algorithm, or some other distribution-independent algorithm had sample
complexity that matches (or nearly matches) the optimal distribution-specific
sample complexity for \emph{every} distribution, we could conclude that the
knowledge of unlabeled data distribution is completely useless.

As \citet{Darnstadt-Simon-Szorenyi-2013} showed this is not the case. They showed
that \emph{any} algorithm for learning projections over $\{0,1\}^n$ that does
not receive the unlabeled distribution as input, requires, for some data
unlabeled distributions, more labeled examples than the Benedek-Itai bound.
However, they did not quantify this gap beside stating that it grows without
bound as $n$ goes to infinity.

In this paper, we quantify the gap by showing that \emph{any}
distribution-independent algorithm for learning the class of projections over
$\{0,1\}^n$ requires, for some unlabeled distributions, $\Omega(\log n)$ times
as many labeled examples as Benedek-Itai bound.
\citet{Darnstadt-Simon-Szorenyi-2013} showed the gap for any class with
Vapnik-Chervonenkis dimension $d$ is at most $O(d)$. It is well known that
Vapnik-Chervonenkis dimensions of projections over $\{0,1\}^n$ is $\Theta(\log
n)$. Thus our lower bound matches the upper bound $O(d)$. To better understand
the relationship of the upper and lower bounds, we illustrate the situation for
the class of projections over $\{0,1\}^n$ in
Figure~\ref{figure:sample-complexity}.

In contrast, we show that for the class of \emph{all} functions (on any domain)
there is no gap between the two settings. In other words, for learning a target
from the class of all functions, unlabeled data are in fact useless. This
illustrates the point that the gap depends in a non-trivial way on the
combinatorial structure of the function class rather than just on the
Vapnik-Chervonenkis dimension.

\begin{figure}
\centering
\includegraphics[scale=1.0]{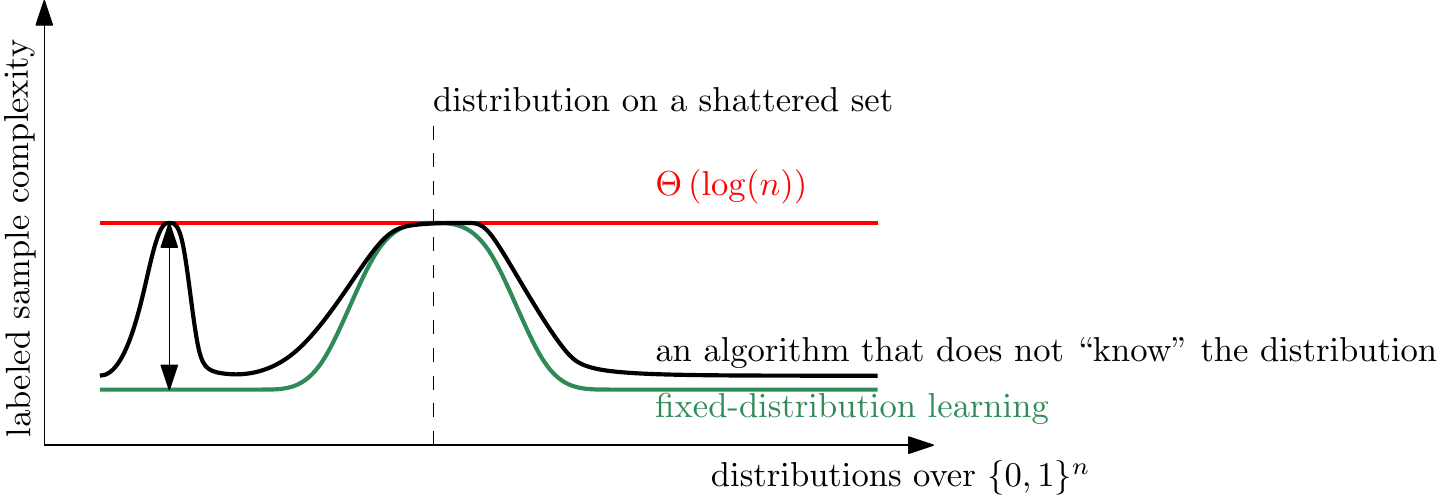}
\caption{\small The graph shows sample complexity bounds of learning a class of
projections over the domain $\{0,1\}^n$ under various unlabeled distributions.
We assume that $\epsilon$ and $\delta$ are constant, say, $\epsilon = \delta =
\frac{1}{100}$. The graph shows three lines. The red horizontal line is
Hanneke's bound for the class of projections, which is $\Theta(\VC(C_n)) =
\Theta(\log n)$. The green line is the Benedek-Itai bound. The green line
touches the red line for certain distributions, but is lower for other
distributions. In particular, for certain distributions the green line is
$O(1)$. The dashed line corresponds to a particular distribution on a shattered
set. This is where the green line and red line touch. Furthermore, here the
upper bound coincides with the lower bound for that particular distribution. The
black line is the sample complexity of an arbitrary
\emph{distribution-independent} algorithm. For example, the reader can think of
the ERM or Hanneke's algorithm. We prove that there exist a distribution where
the black line is $\Omega(\log n)$ times higher than the green line. This
separation is indicated by the double arrow.}
\label{figure:sample-complexity}
\end{figure}

The paper is organized as follows. In \autoref{section:related-work} we review
prior work. \autoref{section:preliminaries} gives the necessary definitions and
basic probabilistic tools. In \autoref{section:projections} we give the proof of
the separation result for projections. In \autoref{section:all-functions} we
prove that there is no gap for the class of all functions. For completeness, in \autoref{section:epsilon-cover} we give a proof of a simple upper bound
$O(1/\epsilon)^{O(d)}$ on the size of the minimum $\epsilon$-cover, and in \autoref{section:fixed-distribution-learning} we give a proof of
Benedek-Itai's $O \left( \frac{\log N_{\epsilon/2} + \log
(1/\delta)}{\epsilon}\right)$ sample complexity upper bound.

\section{Related work}
\label{section:related-work}

The question of whether knowledge of unlabeled data distribution helps was
proposed and initially studied by \citet{Ben-David-Lu-Pal-2008}; see also
\citet{Lu-2009}. However, they considered only classes with Vapnik-Chervonenkis
dimension at most $1$, or classes with Vapnik-Chervonenkis dimension $d$ but
only distributions for which the size of the $\epsilon$-cover is
$\Theta(1/\epsilon)^{\Theta(d)}$, i.e. the $\epsilon$-cover is as large as it
can be.\footnote{For any concept class with Vapnik-Chervonenkis dimension $d$
and any distribution, the size of the smallest $\epsilon$-cover is at most
$O(1/\epsilon)^{O(d)}$.} In these settings, for constant $\epsilon$ and $\delta$,
the separation of labeled sample complexities is at most a constant factor,
which is exactly what \citet{Ben-David-Lu-Pal-2008} proved. In these settings,
unlabeled data are indeed useless. However, these results say nothing about
distributions with $\epsilon$-cover of small size and it ignores the dependency
on the Vapnik-Chervonenkis dimension.

The question was studied in earnest by \citet{Darnstadt-Simon-Szorenyi-2013} who
showed two major results. First, they show that for any non-trivial concept
class $C$ and for every distribution, the ratio of the labeled sample
complexities between distribution-independent and distribution-dependent
algorithms is bounded by the Vapnik-Chervonenkis dimension. Second, they show
that for the class of projections over $\{0,1\}^n$, there are distributions
where the ratio grows to infinity as a function of $n$.

In learning theory, the disagreement metric and $\epsilon$-cover were introduced
by \citet{Benedek-Itai-1991} but the ideas are much older; see
e.g.~\citet{Dudley-1978, Dudley-1984}. The $O(1/\epsilon)^{O(d)}$ upper bound on
size of the smallest $\epsilon$-cover is by \citet[Lemma 7.13]{Dudley-1978}; see
also \citet[Chapter 4]{Devroye-Lugosi-2000} and \citet{Haussler-1995}. We
present a proof of $O(1/\epsilon)^{O(d)}$ upper bound in
\autoref{section:epsilon-cover}.

For any distribution-independent algorithm and any class $C$ of
Vapnik-Chervonenkis dimension $d \ge 2$ and any $\epsilon \in (0,1)$ and $\delta
\in (0,1)$, there exists a distribution over the domain and a concept which
requires at least $\Omega \left(\frac{d + \log(1/\delta)}{\epsilon}\right)$
labeled examples to $\epsilon$-learn with probability at least $1 - \delta$;
see~\citet[Theorem 5.3]{Anthony-Bartlett-1999} and
\citet{Blumer-Ehrenfeucht-Haussler-Warmuth-1989,
Ehrenfeucht-Haussler-Kearns-Valiant-1989}. The proof of the lower bound
constructs a distribution that does \emph{not} depend on the algorithm. The
distribution is a particular distribution over a fixed set shattered by $C$. So
even an algorithm that knows the distribution requires $\Omega \left(\frac{d +
\log(1/\delta)}{\epsilon}\right)$ labeled examples.

\section{Preliminaries}
\label{section:preliminaries}

Let $\X$ be a non-empty set. We denote by $\{0,1\}^\X$ the class of all
functions from $\X$ to $\{0,1\}$. A \emph{concept class over a domain $\X$} is a
subset $C \subseteq \{0,1\}^\X$. A \emph{labeled example} is a pair $(x,y) \in
\X \times \{0,1\}$.

A \emph{distribution-independent learning algorithm} is a function
$A:\bigcup_{m=0}^\infty \left(\X \times \{0,1\} \right)^m \to \{0,1\}^\X$. In
other words, the algorithm gets as input a sequence of labeled examples $(x_1,
y_1), (x_2, y_2), \dots, (x_m, y_m)$ and outputs a function from $\X$ to
$\{0,1\}$. We allow the algorithm to output a function that does not belong to
$C$, i.e., the algorithm can be improper. A \emph{distribution-dependent
algorithm} is a function that maps any probability distribution over $\X$ to a
distribution-independent algorithm.

Let $P$ be a probability distribution over a domain $\X$. For any two functions
$f:\X \to \{0,1\}$, $g:\X \to \{0,1\}$ we define the disagreement pseudo-metric
$$
d_P(f,g) = \Pr_{X \sim P}[f(X) \neq g(X)] \; .
$$
Let $C$ be a concept class over $\X$, let $c \in C$, let $\epsilon, \delta \in (0,1)$.
Let  $X_1, X_2, \dots, X_m$ be an i.i.d. sample from $P$. We define the corresponding
labeled sample $T = ((X_1, c(X_1)), (X_2, c(X_2)), \dots, (X_m, c(X_m)))$.
We say that an algorithm $A$, \emph{$\epsilon$-learns} target $c$ from $m$ samples
with probability at least $1 - \delta$ if
$$
\Pr \left[d_P(c,A(T)) \le \epsilon \right]  \ge 1 - \delta \; .
$$
The smallest non-negative integer $m$ such that for any target $c \in C$,
the algorithm $A$, $\epsilon$-learns the target $c$ from $m$ samples
with probability at least $1-\delta$ is denoted by $m(A,C,P,\epsilon,\delta)$.

We recall the standard definitions from learning theory. For any concept $c:\X
\to \{0,1\}$ and any $S \subseteq \X$ we define $\pi(c,S) = \{x \in S ~:~ c(x) =
1 \}$. In other words, $\pi(c,S)$ is the set of examples in $S$ which $c$ labels
$1$. A set $S \subseteq \X$ is \emph{shattered} by a concept class $C$ if for
any subset $S' \subseteq S$ there exists a classifier $c \in C$ such that
$\pi(c,S) = S'$. \emph{Vapnik-Chervonenkis dimension} of a concept class $C$ is
the size of the largest set $S \subseteq \X$ shattered by $C$. A subset $C'$ of
a concept class $C$ is an \emph{$\epsilon$-cover} of $C$ for a probability
distribution $P$ if for any $c \in C$ there exists $c' \in C'$ such that
$d_P(c,c') \le \epsilon$.

To prove our lower bounds we need three general probabilistic results. The first
one is the standard Hoeffding bound. The other two are simple and intuitive
propositions. The first proposition says that if average error $d_P(c,A(T))$ is
high, the algorithm fails to $\epsilon$-learn with high probability. The second
proposition says that the best algorithm for predicting a bit based on some side
information, is to compute conditional expectation of the bit and thresholds it
at $1/2$.

\begin{theorem}[Hoeffding bound]
Let $X_1, X_2, \dots, X_n$ be i.i.d. random variables that lie in interval
$[a,b]$ with probability one and let $p=\frac{1}{n}\sum_{i=1}^n \Exp[X_i]$.
Then, for any $t \ge 0$,
\begin{align*}
\Pr \left[{\frac {1}{n}} \sum_{i=1}^n X_i \ge p + t \right] \le e^{ - 2n t^2/(a-b)^2} \; , \\
\Pr \left[{\frac {1}{n}} \sum_{i=1}^n X_i \le p - t \right] \le e^{ - 2n t^2/(a-b)^2}  \; .
\end{align*}
\end{theorem}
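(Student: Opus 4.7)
The plan is the standard Chernoff bounding argument. First I would center the variables: let $Y_i = X_i - \Exp[X_i]$, so the $Y_i$ are independent, mean zero, and lie almost surely in an interval of length $b-a$. The upper-tail statement then becomes $\Pr[\sum_{i=1}^n Y_i \ge nt] \le e^{-2nt^2/(b-a)^2}$. For any $s>0$, Markov's inequality applied to the increasing map $x \mapsto e^{sx}$, together with independence, gives
$$
\Pr\left[\sum_{i=1}^n Y_i \ge nt\right] \le e^{-snt} \prod_{i=1}^n \Exp[e^{sY_i}].
$$

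The crucial intermediate step is Hoeffding's moment-generating-function lemma: if $Y$ is mean zero and $Y \in [\alpha,\beta]$ almost surely, then $\Exp[e^{sY}] \le e^{s^2(\beta-\alpha)^2/8}$. I would derive this by convexity of $x\mapsto e^{sx}$:
$$
e^{sY} \le \frac{\beta - Y}{\beta - \alpha}\, e^{s\alpha} + \frac{Y - \alpha}{\beta - \alpha}\, e^{s\beta}.
$$
Taking expectations (and using $\Exp[Y]=0$), the upper bound equals $e^{L(u)}$ where, writing $p = -\alpha/(\beta-\alpha) \in [0,1]$ and $u = s(\beta-\alpha)$, $L(u) = -pu + \log(1 - p + p\, e^{u})$. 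A direct computation shows $L(0)=L'(0)=0$, and $L''(u)$ equals a quantity of the form $q(1-q)$ with $q=pe^u/(1-p+pe^u)\in[0,1]$, hence $L''(u)\le 1/4$. By Taylor expansion this yields $L(u)\le u^2/8$, which is exactly the claimed MGF bound.

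Substituting into the product and noting each $Y_i$ lies in an interval of length $b-a$, we get $\Pr[\sum_i Y_i \ge nt] \le \exp(-snt + ns^2(b-a)^2/8)$. Optimizing over $s>0$ gives $s = 4t/(b-a)^2$ and the advertised bound $e^{-2nt^2/(b-a)^2}$. The lower-tail inequality follows from the upper-tail inequality applied to $-X_1,\dots,-X_n$, which are i.i.d.\ in the interval $[-b,-a]$ of the same length $b-a$ and have mean $-p$. The main (and only nontrivial) obstacle is the one-line second-derivative estimate $L''(u)\le 1/4$ in Hoeffding's lemma; everything else is routine Markov plus independence plus a one-variable optimization.
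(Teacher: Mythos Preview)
Your proof is correct and is the standard Chernoff--Hoeffding argument. The paper, however, does not prove this statement at all: it is quoted in the preliminaries as a well-known tool (``the standard Hoeffding bound'') and used without proof, so there is no paper proof to compare against. Your write-up supplies exactly the classical derivation one would expect.
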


\begin{proposition}[Error probability vs. Expected error]
\label{proposition:error-probability-vs-expected-error}
Let $Z$ be a random variable such that $Z \le 1$ with probability one.
Then,
$$
\Pr[Z > t] \ge \frac{\Exp[Z] - t}{1 - t} \qquad \text{for any $t \in [0, 1)$.}
$$
\end{proposition}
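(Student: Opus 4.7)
The plan is to use a two-case decomposition of $\Exp[Z]$ according to whether $Z > t$ or $Z \le t$, mirroring the standard derivation of Markov's inequality but exploiting the upper bound $Z \le 1$ instead of nonnegativity.

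Concretely, I would write
$$
\Exp[Z] = \Exp\bigl[Z \indicator{Z > t}\bigr] + \Exp\bigl[Z \indicator{Z \le t}\bigr].
$$
On the event $\{Z > t\}$, the almost sure bound $Z \le 1$ gives $Z \indicator{Z > t} \le \indicator{Z > t}$, so the first term is at most $\Pr[Z > t]$. On the complementary event $\{Z \le t\}$, by definition $Z \indicator{Z \le t} \le t \indicator{Z \le t}$ (note that this inequality holds even if $Z$ can take negative values, since we only need an upper bound), so the second term is at most $t \Pr[Z \le t] = t(1 - \Pr[Z > t])$. Combining,
$$
\Exp[Z] \le \Pr[Z > t] + t\bigl(1 - \Pr[Z > t]\bigr) = t + (1-t)\Pr[Z > t].
$$
Solving for $\Pr[Z > t]$ (and using $1 - t > 0$, which is where $t < 1$ is needed) yields the claimed inequality.

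There is really no obstacle here: the only subtlety is making sure the upper bound on the tail-event contribution uses $Z \le 1$ rather than some weaker bound, since otherwise we would recover only a version of Markov that requires $Z \ge 0$. Because $Z \le 1$ is the symmetric counterpart of $Z \ge 0$ under the reflection $Z \mapsto 1 - Z$, an alternative one-line proof is to apply ordinary Markov to the nonnegative variable $1 - Z$ at level $1 - t$, giving $\Pr[1-Z \ge 1-t] \le \Exp[1-Z]/(1-t)$, which rearranges to exactly $\Pr[Z > t] \ge (\Exp[Z] - t)/(1-t)$ after handling the boundary between $\ge$ and $>$ (which is harmless since we can pass from $Z < 1-t+\eta$ to $Z \le t$ by continuity and then let $\eta \to 0$, or simply note that the Markov step already gives the strict version). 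I would present the direct decomposition proof, since it is the cleanest and needs no boundary-case fuss.
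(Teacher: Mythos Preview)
Your proof is correct and is essentially identical to the paper's: both bound $\Exp[Z]$ by $t \cdot \Pr[Z \le t] + 1 \cdot \Pr[Z > t]$ and then solve for $\Pr[Z > t]$. The only difference is cosmetic---you write out the indicator-function decomposition explicitly, while the paper states the inequality in one line.
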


\begin{proof}
We have
\begin{align*}
\Exp[Z]
\le t \cdot \Pr[Z \le t] + 1 \cdot \Pr[Z > t]
= t \cdot (1 - \Pr[Z > t]) + \Pr[Z > t] \; .
\end{align*}
Solving for $\Pr[Z > t]$ finishes the proof.
\end{proof}

\begin{proposition}[Predicting Single Bit]
\label{proposition:single-bit}
Let $\U$ be a finite non-empty set. Let $U,V$ be random variables (possibly
correlated) such that $U \in \U$ and $V \in \{0,1\}$ with probability one. Let
$f:\U \to \{0,1\}$ be a predictor. Then,
$$
\Pr\left[ f(U) \neq V \right]
\ge \sum_{u \in \U} \left( \frac{1}{2} - \left| \frac{1}{2} -  \Exp \left[V \, \middle| \, U = u\right] \right| \right) \cdot \Pr[U = u] \; .
$$
\end{proposition}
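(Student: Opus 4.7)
The plan is to decompose the error probability by conditioning on the value of $U$, and then show that for each fixed outcome $U=u$ the conditional error is at least $\min(p_u, 1-p_u)$, where $p_u = \Exp[V \mid U=u]$. Summing weighted by $\Pr[U=u]$ then yields the claimed bound after rewriting $\min(p_u, 1-p_u) = \frac{1}{2} - |\frac{1}{2} - p_u|$.

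First I would apply the law of total probability to write
\[
\Pr[f(U) \neq V] = \sum_{u \in \U} \Pr[f(U) \neq V \mid U = u] \cdot \Pr[U = u].
\]
For each $u$, the value $f(u) \in \{0,1\}$ is deterministic, so the conditional error is either $\Pr[V = 0 \mid U = u] = 1 - p_u$ (when $f(u) = 1$) or $\Pr[V = 1 \mid U = u] = p_u$ (when $f(u) = 0$). In both cases the conditional error is at least $\min(p_u, 1 - p_u)$. Since $V \in \{0,1\}$, we have $\Exp[V \mid U=u] = p_u$, and the elementary identity $\min(p,1-p) = \frac{1}{2} - |\frac{1}{2} - p|$ converts this to the form in the statement.

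Plugging this lower bound on the conditional error back into the total probability decomposition gives
\[
\Pr[f(U) \neq V] \ge \sum_{u \in \U} \left( \frac{1}{2} - \left| \frac{1}{2} - \Exp[V \mid U = u] \right| \right) \cdot \Pr[U = u],
\]
which is exactly the desired inequality. There is no real obstacle here: the only substantive step is the observation that a deterministic predictor cannot do better than the Bayes-optimal rule of thresholding the conditional mean at $1/2$, and that this rule achieves conditional error exactly $\min(p_u, 1-p_u)$. Finiteness of $\U$ is used only to write the outer sum as a plain sum rather than an integral.
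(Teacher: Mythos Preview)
Your proposal is correct and follows essentially the same approach as the paper: condition on $U=u$, observe that the conditional error is at least $\min\{p_u,1-p_u\}$ since $f(u)$ is deterministic, rewrite this as $\tfrac{1}{2}-\bigl|\tfrac{1}{2}-p_u\bigr|$, and sum over $u$ weighted by $\Pr[U=u]$. The paper's proof is identical in structure and content.
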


\begin{proof}
We have
$$
\Pr \left[ f(U) \neq V \right] = \sum_{u \in \U} \Pr \left[ f(U) \neq V \, \middle| \, U = u \right] \cdot \Pr[U = u] \; .
$$
It remains to show that
$$
\Pr\left[ f(U) \neq V \, \middle| \, U = u \right]
\ge
\frac{1}{2} - \left| \frac{1}{2} -  \Exp \left[V \, \middle| \, U = u \right] \right| \; .
$$
Since if  $U=u$, the value $f(U) = f(u)$ is fixed, and hence
\begin{align*}
\Pr\left[ f(U) \neq V \, \middle| \, U = u \right]
& \ge \min\left\{ \Pr \left[ V = 1 \, \middle| \, U = u \right], \ \Pr \left[ V = 0 \, \middle| \, U = u \right] \right\} \\
& = \min\left\{ \Exp \left[ V  \, \middle| \, U = u \right], \ 1 - \Exp \left[ V \, \middle| \, U = u \right] \right\} \\
& = \frac{1}{2} - \left| \frac{1}{2} -  \Exp \left[ V  \, \middle| \, U = u \right] \right|
\end{align*}
We used the fact that $\min\{x, 1 - x\} = \frac{1}{2} - \left| \frac{1}{2} - x \right|$ for all $x \in \R$
which can be easily verified by considering two cases: $x \ge \frac{1}{2}$ and $x < \frac{1}{2}$.
\end{proof}

\section{Projections}
\label{section:projections}

In this section, we denote by $C_n$ the class of \emph{projections} over the
domain $\X = \{0,1\}^n$. The class $C_n$ consists of $n$ functions $c_1, c_2,
\dots, c_n$ from $\{0,1\}^n$ to $\{0,1\}$. For any $i \in \{1,2,\dots,n\}$, for
any $x \in \{0,1\}^n$, the function $c_i$ is defined as $c_i((x[1], x[2], \dots,
x[n])) = x[i]$.

For any $\epsilon \in (0,\frac{1}{2})$ and $n \ge 2$, we consider a family
$\P_{n,\epsilon}$ consisting of $n$ probability distributions $P_1, P_2, \dots,
P_n$ over the Boolean hypercube $\{0,1\}^n$. In order to describe the
distribution $P_i$, for some $i$, consider a random vector $X = (X[1], X[2],
\dots, X[n])$ drawn from $P_i$. The distribution $P_i$ is a product
distribution, i.e., $\Pr[X = x] = \prod_{j=1}^n \Pr[X[j] = x[j]]$ for any $x \in
\{0,1\}^n$. The marginal distributions of the coordinates are
$$
\Pr[X[j] = 1] =
\begin{cases}
\frac{1}{2} & \text{if $j = i$,} \\
\epsilon & \text{if $j\neq i$,} \\
\end{cases}
\qquad \text{for $j=1,2,\dots,n$.}
$$
The reader should think of $\epsilon$ as a constant that does not depend on $n$,
say, $\epsilon=\frac{1}{100}$.

The following result is folklore. We include its proof for completeness. 

\begin{proposition}
\label{proposition:vc-dimension-projections}
Vapnik-Chervonenkis dimension of $C_n$ is $\lfloor \log_2 n \rfloor$.
\end{proposition}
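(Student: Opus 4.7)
The plan is to turn the shattering condition into a purely combinatorial statement about a binary matrix, and then the result falls out of a counting argument.

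I will represent a candidate shattered set $S = \{x_1, x_2, \ldots, x_k\} \subseteq \{0,1\}^n$ as the rows of a $k \times n$ matrix $M$ with entries in $\{0,1\}$. Under this encoding, applying the projection $c_i$ to the points of $S$ reads off the $i$-th column of $M$: we have $\pi(c_i, S) = \{x_j \in S : M_{j,i} = 1\}$. Hence the condition that $S$ is shattered by $C_n$ is equivalent to saying that every vector $v \in \{0,1\}^k$ appears as some column of $M$.

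For the upper bound, a shattered set of size $k$ forces all $2^k$ binary vectors of length $k$ to appear among the $n$ columns of $M$, which requires $2^k \le n$, i.e., $k \le \lfloor \log_2 n \rfloor$. For the matching lower bound, set $k = \lfloor \log_2 n \rfloor$, so $2^k \le n$. I will construct an explicit $k \times n$ matrix $M$ whose first $2^k$ columns are the $2^k$ distinct vectors of $\{0,1\}^k$ (in any order) and whose remaining $n - 2^k$ columns are arbitrary (say, all-zero). The $k$ rows of this $M$ form a set $S \subseteq \{0,1\}^n$ of size $k$ which, by the equivalence above, is shattered by $C_n$.

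There is no real obstacle; the only thing to be slightly careful about is handling the edge case $n = 2$ (where $\lfloor \log_2 n \rfloor = 1$) and making sure the $k$ rows of $M$ are genuinely distinct points of $\{0,1\}^n$, which is automatic since already the first $2^k$ columns contain both the all-zero and all-one column, forcing no two rows to coincide. Combining the two bounds yields $\VC(C_n) = \lfloor \log_2 n \rfloor$.
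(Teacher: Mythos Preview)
Your argument is correct and is essentially the paper's proof in different clothing: both encode the candidate shattered set as the rows of a $k\times n$ binary matrix, observe that shattering by $C_n$ means every vector of $\{0,1\}^k$ occurs as a column (the paper phrases this as ``at least $2^d$ distinct functions''), and build the lower-bound witness by letting the columns enumerate $\{0,1\}^k$ (the paper does this via the binary expansion of the column index).

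One small slip in your row-distinctness remark: the all-zero and all-one columns do nothing to separate rows, since every row has the same entry there. The correct reason your $k$ rows are distinct is that among the $2^k$ columns you also have, say, each standard basis vector $e_i$, and the column $e_i$ distinguishes row $i$ from every other row.
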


\begin{proof}
Let us denote the Vapnik-Chervonenkis dimension by $d$. Recall that $d$ is the
size of the largest shattered set. Let $S$ be any shattered set of size $d$.
Then, there must be at least $2^d$ distinct functions in $C_n$. Hence, $d \le
\log_2 |C_n| = \log_2 n$. Since $d$ is an integer, we conclude that $d \le
\lfloor \log_2 n \rfloor$.

On the other hand, we construct a shattered set of size $\lfloor \log_2 n
\rfloor$. The set will consists of points $x_1, x_2, \dots, x_{\lfloor \log_2 n
\rfloor} \in \{0,1\}^n$. For any $i \in \{1,2,\dots,\lfloor \log_2 n \rfloor\}$
and any $j \in \{0,1,2,\dots,n-1\}$, we define $x_i[j]$ to be the $i$-th bit
in the binary representation of the number $j$. (The bit at position $i=1$ is the
least significant bit.) It is not hard to see that for any $v \in
\{0,1\}^{\lfloor \log_2 n \rfloor}$, there exists $c \in C_n$ such that $v =
(c(x_1), c(x_2), \dots, c(x_{\lfloor \log_2 n \rfloor}))$. Indeed, given $v$,
let $k \in \{0,1,\dots,2^{\lfloor \log_2 n \rfloor} - 1\}$ be the number with
binary representation $v$, then we can take $c = c_{k+1}$.
\end{proof}

\begin{lemma}[Small cover]
Let $n \ge 2$ and $\epsilon \in (0,\frac{1}{2})$. Any distribution in $\P_{n,\epsilon}$
has $2\epsilon$-cover of size $2$.
\end{lemma}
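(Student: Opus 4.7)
The plan is to exploit the fact, fixed by the definition of $\epsilon$-cover in the preliminaries, that a cover must be a subset of the concept class itself, so we are looking for two projections that together $2\epsilon$-approximate all $n$ projections under $P_i$. The key observation is that under $P_i$ the coordinate $i$ is essentially ``special'' (it has marginal $1/2$) whereas every other coordinate has marginal $\epsilon$, so two projections like $c_i$ together with any other $c_k$ ought to do the job.

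Concretely, I would fix an arbitrary distribution $P_i \in \mathcal{P}_{n,\epsilon}$, pick an arbitrary index $k \in \{1, \dots, n\} \setminus \{i\}$, and propose the cover $C' = \{c_i, c_k\} \subseteq C_n$. The verification splits into three cases. For $c_i$ and $c_k$ themselves, they are covered by themselves with disagreement $0$. For any other projection $c_j$ with $j \notin \{i,k\}$, I would compute the disagreement with $c_k$ directly: since $X[j]$ and $X[k]$ are independent Bernoulli($\epsilon$) under $P_i$, we have
\[
d_{P_i}(c_j, c_k) = \Pr_{X \sim P_i}[X[j] \neq X[k]] = 2\epsilon(1-\epsilon) < 2\epsilon,
\]
which is the only nontrivial computation in the proof.

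There is essentially no obstacle here; the only thing to be careful about is to use the paper's convention that an $\epsilon$-cover must live inside the concept class (so the constant-$0$ function, which would give an even better $\epsilon$-cover of size $2$, is not admissible), and to verify that the construction works uniformly for every $P_i$ in the family rather than needing distribution-specific tricks. The bound $2\epsilon(1-\epsilon) < 2\epsilon$ is loose, but it matches exactly the $2\epsilon$ in the statement and avoids any assumption on $\epsilon$ beyond $\epsilon \in (0, \tfrac{1}{2})$.
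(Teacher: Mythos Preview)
Your proposal is correct and essentially identical to the paper's own proof: both pick the cover $\{c_i, c_j\}$ for some $j \neq i$, handle the two cover elements trivially, and for any third projection compute the disagreement with the non-special projection as $2\epsilon(1-\epsilon) < 2\epsilon$ using independence of the two Bernoulli($\epsilon$) coordinates. The only difference is cosmetic (you swap the roles of the letters $j$ and $k$).
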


\begin{proof}
Consider a distribution $P_i \in \P_{n,\epsilon}$ for some $i \in \{1,2,\dots,n\}$.
Let $j$ be an arbitrary index in $\{1,2,\dots,n\} \setminus \{i\}$.
Consider the projections $c_i, c_j \in C_n$. We claim that $C' = \{c_i, c_j\}$
is a $2\epsilon$-cover of $C_n$.

To see that $C'$ is a $2\epsilon$-cover of $C_n$, consider any $c_k \in C_n$.
We need to show that $d_{P_i}(c_i, c_k) \le 2\epsilon$ or $d_{P_i}(c_j, c_k)
\le 2\epsilon$. If $k = i$ or $k = j$, the condition is trivially satisfied.
Consider $k \in \{1,2,\dots,n\} \setminus \{i,j\}$. Let $X \sim P_i$. Then,
\begin{align*}
d_{P_i}(c_j, c_k)
& = \Pr[c_j(X) \neq c_k(X)] \\
& = \Pr[c_j(X) = 1 \wedge c_k(X) = 0] + \Pr[c_j(X) = 0 \wedge c_k(X) = 1] \\
& = \Pr[X[j] = 1 \wedge X[k] = 0]   + \Pr[X[j] = 0 \wedge X[k] = 1] \\
& = \Pr[X[j] = 1] \Pr[X[k] = 0]  + \Pr[X[j] = 0] \Pr[X[k] = 1] \\
& = 2 \epsilon \left( 1 - \epsilon \right)  \\
& < 2 \epsilon \; .
\end{align*}
\end{proof}

Using Benedek-Itai bound (Theorem~\ref{theorem:benedek-itai} in
\autoref{section:fixed-distribution-learning}) we obtain the corollary below.
The corollary states that the distribution-dependent sample complexity
of learning target in $C_n$ under any distribution from $P_{n,\epsilon}$
does \emph{not} depend on $n$.

\begin{corollary}[Learning with knowledge of the distribution]
Let $n \ge 2$ and $\epsilon \in (0,\frac{1}{2})$.  There exists a
distribution-dependent algorithm such that for any distribution from $\P_{n,\epsilon}$,
any $\delta \in (0,1)$, any target function $c \in C_n$, if the algorithm gets
$$
m \ge \frac{12\ln(2/\delta)}{\epsilon}
$$
labeled examples, with
probability at least $1 - \delta$, it $4\epsilon$-learns the target.
\end{corollary}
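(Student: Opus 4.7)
The plan is to combine the Small Cover lemma just established with Benedek-Itai's sample complexity upper bound (Theorem~\ref{theorem:benedek-itai} in \autoref{section:fixed-distribution-learning}). The distribution-dependent algorithm is the obvious one: on an input distribution $P_i \in \P_{n,\epsilon}$, it identifies the index $i$ (the unique coordinate with marginal $\frac{1}{2}$ rather than $\epsilon$), picks any $j \neq i$, and then outputs the element of the size-two cover $C' = \{c_i, c_j\}$ from the Small Cover lemma that minimizes empirical error on the labeled sample. Since $C'$ is determined entirely by the input distribution $P_i$, this is genuinely a distribution-dependent learner in the sense of \autoref{section:preliminaries}.

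To obtain a $4\epsilon$-learning guarantee rather than an $\epsilon$-learning one, I would instantiate Benedek-Itai with accuracy parameter $\epsilon' = 4\epsilon$ and confidence $\delta$. The required cover radius is then $\epsilon'/2 = 2\epsilon$, which is exactly the resolution at which the Small Cover lemma delivers $N_{\epsilon'/2} \le 2$. Substituting this into the Benedek-Itai bound $m = O\left(\frac{\log N_{\epsilon'/2} + \log(1/\delta)}{\epsilon'}\right)$ eliminates all dependence on $n$ and leaves a sample complexity of the shape $\frac{a + b\ln(1/\delta)}{\epsilon}$ for absolute constants $a,b$. Tracking the explicit constants from the appendix proof, which proceeds by a standard union bound over the two-element cover and produces a bound of the form $m \ge \frac{\ln(|C'|/\delta)}{\Omega(\epsilon')}$, is then enough to verify that the threshold $m \ge \frac{12\ln(2/\delta)}{\epsilon}$ claimed in the corollary is comfortably sufficient.

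I do not expect any real obstacle here: once the Small Cover lemma is in hand, the corollary is essentially a one-line specialization of Benedek-Itai, and the only work is bookkeeping of constants. The one minor subtlety worth flagging is that the cover we have access to has radius $2\epsilon$ rather than the radius $\epsilon/2$ one typically plugs into Benedek-Itai; this is precisely why the conclusion is weakened from $\epsilon$-learning to $4\epsilon$-learning, the factor of $4$ being forced by the available cover radius, while the numerical constant $12$ simply absorbs the multiplicative slack in the union-bound step of Benedek-Itai's proof.
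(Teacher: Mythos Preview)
Your proposal is correct and matches the paper's approach exactly: the paper simply states that the corollary follows from the Small Cover lemma together with the Benedek--Itai bound (Theorem~\ref{theorem:benedek-itai}), and your instantiation with accuracy parameter $\epsilon' = 4\epsilon$, cover radius $\epsilon'/2 = 2\epsilon$, and $N = 2$ recovers precisely $m \ge 48\,\frac{\ln 2 + \ln(1/\delta)}{4\epsilon} = \frac{12\ln(2/\delta)}{\epsilon}$. No additional ideas are needed.
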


The next theorem states that without knowing the distribution,
learning a target under a distribution from $\P_{n,\epsilon}$
requires at least $\Omega(\log n)$ labeled examples. It is important to note that $\epsilon$ in this bound is the parameter of the distribution, and not the accuracy of the PAC learning model.

\begin{theorem}[Learning without knowledge of the distribution]
\label{thm: distribution independent bound}
For any distribution-independent algorithm, any $\epsilon \in (0,\frac{1}{4})$ and any
$n \ge 600/\epsilon^3$ there exists a distribution $P \in \P_{n,\epsilon}$ and a target
concept $c \in C_n$ such that if the algorithm gets
$$
m \le \frac{\ln n}{3 \ln (1/\epsilon)}
$$
labeled examples, it fails to $\frac{1}{16}$-learn the target concept with probability
more than $\frac{1}{16}$.
\end{theorem}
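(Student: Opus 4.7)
The plan is a Bayesian averaging argument against an adversarial index. Place a uniform prior on the unknown index $I\in\{1,\dots,n\}$; draw a training sample $T$ and an extra test point $X$ i.i.d.\ from $P_I$, labelling $T$ by $c_I$; and write $\hat c = A(T)$. Since $\Exp_{I,T}[d_{P_I}(c_I,\hat c)] = \Pr[\hat c(X)\ne X[I]]$, Proposition~\ref{proposition:single-bit} with $U=(T,X)$, $V=X[I]$ and $f(U)=\hat c(X)$ yields
\[
\Exp_{I,T}\bigl[d_{P_I}(c_I,\hat c)\bigr] \;\ge\; \Exp_{T,X}\bigl[\min\{p,1-p\}\bigr],\qquad p := \Pr\bigl[X[I]=1 \,\big|\, T, X\bigr].
\]
An absolute lower bound here of, say, $31/256$ then yields a failure probability exceeding $1/16$ via Proposition~\ref{proposition:error-probability-vs-expected-error} with $t=1/16$; averaging over $I$ produces a concrete bad pair $(P_{i^\star},c_{i^\star})$ as required by the theorem.

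A short Bayes calculation, exploiting the product structure of $P_i$, makes $p$ explicit. Let $A=A(T)=\{i : X_\ell[i]=Y_\ell\text{ for all }\ell\}$ be the set of projections consistent with $T$. All $i\in A$ assign identical likelihood to $T$, and the test-point factor $\Pr[X=x\mid I=i]$ contributes a weight proportional to $1/[\epsilon^{X[i]}(1-\epsilon)^{1-X[i]}]$. Writing $k_b=|\{i\in A : X[i]=b\}|$, this gives
\[
p \;=\; \frac{k_1(1-\epsilon)}{k_1(1-\epsilon)+k_0\epsilon}, \qquad\text{so}\qquad \min\{p,1-p\}\;\ge\;\tfrac{1}{4}
\]
whenever $k_1(1-\epsilon)$ and $k_0\epsilon$ lie within a factor of $2$ of each other. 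The remaining work is to show that this latter event has probability at least $1/2-o(1)$.

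I would carry this out in two Chernoff-style steps. \textbf{Step 1 (size of $A$).} Condition on $K:=\sum_\ell Y_\ell\le m/2$, an event of probability at least $1/2$ since $K\sim\mathrm{Bin}(m,1/2)$; then coordinate-wise independence under $P_I$ makes the indicators $\mathbf{1}[j\in A]$ for $j\ne I$ i.i.d.\ Bernoulli with success probability $q = \epsilon^K(1-\epsilon)^{m-K}\ge\epsilon^{m/2}(1-\epsilon)^{m/2}$. The hypothesis $m\le\ln n/(3\ln(1/\epsilon))$ gives $\epsilon^{m/2}\ge n^{-1/6}$, and for $\epsilon<1/4$ a routine estimate yields $(1-\epsilon)^{m/2}\ge n^{-o(1)}$, so $q\ge n^{-1/4}$ for $n$ large enough. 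A Chernoff bound on $|A|-1\sim\mathrm{Bin}(n-1,q)$ then puts $|A|\ge n^{3/4}/2$ with probability $1-o(1)$, and $n\ge 600/\epsilon^3$ makes $|A|\epsilon$ very large. \textbf{Step 2 (concentration of $k_1$).} Given such an $|A|$, we have $k_1-\mathbf{1}[X[I]=1]\sim\mathrm{Bin}(|A|-1,\epsilon)$, so a multiplicative Chernoff bound places $k_1\in[|A|\epsilon/2,\ 2|A|\epsilon]$ except on an exponentially unlikely event, forcing $\min\{p,1-p\}\ge 1/4$.

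The structural core---the Bayesian reduction via Proposition~\ref{proposition:single-bit} and the posterior formula for $p$---is short and clean. The main obstacle will be the quantitative bookkeeping: choosing the Chernoff-exponent constants and matching them against the factors $\epsilon^{m/2}$ and $(1-\epsilon)^{m/2}$ uniformly over $\epsilon\in(0,1/4)$, so that the single mild hypothesis $n\ge 600/\epsilon^3$ really carries both concentration steps at the scale needed to finish with $\Exp[\min\{p,1-p\}]>31/256$.
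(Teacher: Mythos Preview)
Your approach is essentially the paper's: a uniform prior on $I$, the reduction via Proposition~\ref{proposition:single-bit}, an explicit Bayes posterior formula for $\Pr[X[I]=1\mid T,X]$ in terms of $|A|$ and $k_1$, and two concentration steps (Hoeffding/Chernoff) showing first that $|A|$ is large and then that $k_1/|A|$ concentrates near $\epsilon$, so that $\min\{p,1-p\}\ge 1/4$ on an event of probability at least $1/2$. The only notable differences are cosmetic: the paper avoids your conditioning on $\sum_\ell Y_\ell \le m/2$ by using the uniform lower bound $q\ge \epsilon^m\ge n^{-1/3}$ (valid for \emph{every} label vector since $\epsilon<1-\epsilon$), obtaining $|A|\gtrsim n^{2/3}$ directly and saving a factor of $1/2$ in the probability budget; and your claim $(1-\epsilon)^{m/2}\ge n^{-o(1)}$ should read $n^{-c}$ for a small absolute constant $c<1/12$, though your conclusion $q\ge n^{-1/4}$ survives with the corrected estimate.
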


The main idea of the proof is the following. Assume that the learner is restricted to output some function that belongs to $C_n$ (i.e., the learner is \emph{proper}). Then with high probability, the number of coordinates that coincide with the target on a random sample is $\Omega(\epsilon n)$, and, thus, the number of projections that output the same value on each of the $m$ random samples is $\Omega(\epsilon^m n)$. Therefore, with high probability, at least one other projection produces the exact same output as the target. In this case, the learner has to choose randomly, and the probability of choosing a wrong answer is at least $1/2$. This implies that the learner must see at least $m \ge \Omega(\frac{\ln n}{\ln (1/\epsilon)})$ samples. In the proof below we make this intuition formal, and generalize it to the case of improper learners, too.

\begin{proof}[Proof of Theorem~\ref{thm: distribution independent bound}]
Let $A$ be any learning algorithm. For ease of notation, we formalize it is a function
$$
A:\bigcup_{m=0}^\infty \left(\{0,1\}^{m \times n} \times \{0,1\}^m\right) \to \{0,1\}^{\{0,1\}^n} \; .
$$
The algorithm receives an $m \times n$ matrix and a binary vector of length $m$.
The rows of the matrix corresponds to unlabeled examples and the vector encodes
the labels. The output of $A$ is any function from $\{0,1\}^n \to \{0,1\}$.

We demonstrate the existence of a pair $(P,c) \in \P_{n,\epsilon} \times C_n$ which
cannot be learned with $m$ samples by the probabilistic method. Let $I$ be chosen
uniformly at random from $\{1,2,\dots,n\}$. We consider the distribution $P_I \in
\P_{n,\epsilon}$ and target $c_I \in C_n$. Let $X_1, X_2, \dots, X_m$ be an i.i.d.
sample from $P_I$ and let $Y_1 = c_I(X_1), Y_2 = c_I(X_2), \dots, Y_m =
c_I(X_m)$ be the target labels. Let $X$ be the $m \times n$ matrix with entries
$X_i[j]$ and let $Y = (Y_1, Y_2, \dots, Y_m)$ be the vector of labels. The
output of the algorithm is $A(X,Y)$. We will show that
\begin{equation}
\label{equation:projections-failure-probability}
\Exp \left[d_{P_I}(c_I, A(X,Y)) \right] \ge \frac{1}{8} \; .
\end{equation}
This means that there exists $i \in \{1,2,\dots,n\}$ such that
$$
\Exp \left[d_{P_i}(c_i, A(X,Y)) ~\middle|~ I = i \right] \ge \frac{1}{8} \; .
$$
By Proposition~\ref{proposition:error-probability-vs-expected-error},
$$
\Pr \left[ d_{P_i}(c_i, A(X,Y)) > \frac{1}{16} ~\middle|~ I = i \right] \ge \frac{\frac{1}{8} - \frac{1}{16}}{1 - \frac{1}{16}} > \frac{1}{16} \; .
$$

It remains to prove \eqref{equation:projections-failure-probability}. Let $Z$ be
a test sample drawn from $P_I$. That is, conditioned on $I$, the sequence $X_1,
X_2, \dots, X_m, Z$ is i.i.d. drawn from $P_I$. Then, by
Proposition~\ref{proposition:single-bit},
\begin{multline}
\label{equation:projections-expected-error-lower-bound}
\Exp \left[d_{P_I}(c_I, A(X,Y))\right]
= \Pr\left[ A(X,Y)(Z) \neq c_I(Z) \right] \\
\ge
\sum_{\substack{x \in \{0,1\}^{m \times n} \\ y \in \{0,1\}^m \\ z \in \{0,1\}^n}} \left( \frac{1}{2} - \left| \frac{1}{2} - \Exp\left[ c_I(Z) \, \middle| \, X = x, Y = y, Z = z \right] \right| \right) \cdot \Pr \left[X = x, Y = y, Z = z \right]  \; .
\end{multline}
We need to compute $\Exp\left[ c_I(Z) \, \middle| \, X = x, Y = y, Z = z \right]$.
For that we need some additional notation.
For any matrix $x \in \{0,1\}^{m \times n}$, let $x[1], x[2], \dots, x[n]$ be its columns.
For any matrix $x \in \{0,1\}^{m \times n}$ and vector $y \in \{0,1\}^m$ let
$$
k(x,y) = \{ i \in \{1,2,\dots,n\} ~:~ x[i] = y \}
$$
be the set of indices of columns of $x$ equal to the vector $y$. Also, we define
$\norm{\cdot}$ to be the sum of absolute values of entries of a vector or a
matrix. (Since we use $\norm{\cdot}$ only for binary matrices and binary
vectors, it will be just the number of ones.)

For any $i \in \{1,2,\dots,n\}$,
$$
\Pr \left[I = i, X = x, Y = y \right]
=
\begin{cases}
\frac{1}{n} \left( \frac{1}{2} \right)^m \epsilon^{\norm{x} - \norm{y}} (1 - \epsilon)^{mn - \norm{x} + \norm{y}} & \text{if $i \in k(x,y)$,} \\
0 & \text{if $i \not \in k(x,y)$.} \\
\end{cases}
$$
Therefore, for any $i \in \{1,2,\dots,n\}$,
\begin{align*}
\Pr \left[I = i \, \middle| \, X = x, Y = y \right]
& = \frac{\Pr \left[I = i, X = x, Y = y \right]}{\Pr \left[ X = x, Y = y \right]} \\
& = \frac{\Pr \left[I = i, X = x, Y = y \right]}{\sum_{j \in k(x,y)} \Pr \left[ I = j, X = x, Y = y \right]} \\
&  =
\begin{cases}
\frac{1}{|k(x,y)|} & \text{if $i \in k(x,y)$,} \\
0 & \text{if $i \not \in k(x,y)$.} \\
\end{cases}
\end{align*}

Conditioned on $I$, the variables $Z$ and $(X,Y)$ are independent. Thus,
for any $x \in \{0,1\}^n$, and $i = 1,2,\dots,n$,
\begin{align*}
\Pr \left[Z = z \, \middle| \, I = i, X = x, Y = y \right]
& = \Pr \left[Z = z \, \middle| \, I = i \right] \\
& =
\begin{cases}
\frac{1}{2} \epsilon^{\norm{z} - 1} (1 - \epsilon)^{n - \norm{z}} & \text{if $z[i] = 1$,} \\
\frac{1}{2} \epsilon^{\norm{z}} (1 - \epsilon)^{n-1 - \norm{z}}& \text{if $z[i] = 0$.} \\
\end{cases}
\end{align*}
This allows us to compute the conditional probability
\begin{align*}
&\Pr \left[I = i, Z = z \, \middle| \, X = x, Y = y \right] \\
& \quad = \Pr \left[Z = z \, \middle| \, I = i, X = x, Y = y \right] \cdot \Pr \left[I = i \, \middle| \, X = x, Y = y \right] \\
& \quad =
\begin{cases}
\frac{1}{2|k(x,y)|} \epsilon^{\norm{z} - 1} (1 - \epsilon)^{n - \norm{z}} & \text{if $i \in k(x,y)$ and $z[i] = 1$,} \\
\frac{1}{2|k(x,y)|} \epsilon^{\norm{z}} (1 - \epsilon)^{n - 1 - \norm{z}} & \text{if $i \in k(x,y)$ and $z[i] = 0$,} \\
0 & \text{if $i \not \in k(x,y)$.} \\
\end{cases}
\end{align*}
For any $z \in \{0,1\}^n$, let
$$
s(x,y,z) = \{ i \in k(x,y) ~:~ z[i] = 1 \} \; ,
$$
and note that $s(x,y,z) \subseteq k(x,y)$.
Then,
\begin{align*}
& \Pr \left[Z = z \, \middle| \, X = x, Y = y \right] \\
& \quad = \sum_{i=1}^n \Pr \left[Z = z, I = i \, \middle| \, X = x, Y = y \right] \\
& \quad = \sum_{i \in k(x,y)} \Pr \left[Z = z, I = i \, \middle| \, X = x, Y = y \right] \\
& \quad = \sum_{i \in s(x,y,x)} \Pr \left[Z = z, I = i \, \middle| \, X = x, Y = y \right] + \sum_{i \in k(x,y) \setminus s(x,y,z)} \Pr \left[Z = z, I = i \, \middle| \, X = x, Y = y \right] \\
& \quad = \frac{1}{2|k(x,y)|} \cdot |s(x,y,z)| \cdot \epsilon^{\norm{z} - 1} (1 - \epsilon)^{n - \norm{z}} + \frac{1}{2|k(x,y)|} \cdot (|k(x,y)| - |s(x,y,z)|) \cdot \epsilon^{\norm{z}} (1 - \epsilon)^{n - 1 - \norm{z}} \\
& \quad = \frac{\epsilon^{\norm{z} - 1} (1 - \epsilon)^{n - 1 - \norm{z}}}{2|k(x,y)|} \cdot \left( |s(x,y,z)| \cdot (1 - 2\epsilon) + |k(x,y)| \cdot \epsilon \right) \; .
\end{align*}
Hence,
\begin{align*}
& \Exp\left[ c_I(Z) \, \middle| \, X = x, Y = y, Z = z \right] \\
& \quad = \Pr \left[ Z[I] = 1 \, \middle| \, X = x, Y = y, Z = z \right] \\
& \quad = \frac{\displaystyle \Pr \left[ Z[I] = 1, Z = z \, \middle| \, X = x, Y = y \right]}{\displaystyle \Pr \left[ Z = z \, \middle| \, X = x, Y = y \right]} \\
& \quad = \frac{\displaystyle \sum_{i=1}^n \Pr \left[ I = i, Z[i] = 1, Z = z \, \middle| \, X = x, Y = y \right]}{\displaystyle \Pr \left[ Z = z \, \middle| \, X = x, Y = y \right]} \\
& \quad = \frac{\displaystyle \frac{|s(x,y,z)|}{2|k(x,y)|} \cdot \epsilon^{\norm{z} - 1} (1 - \epsilon)^{n - \norm{z}}}{\displaystyle \frac{\epsilon^{\norm{z} - 1} (1 - \epsilon)^{n - 1 - \norm{z}}}{2|k(x,y)|} \cdot \left( |s(x,y,z)| \cdot (1 - 2\epsilon) + |k(x,y,z)| \cdot \epsilon \right)} \\
& \quad = \frac{\displaystyle |s(x,y,z)| \cdot (1 - \epsilon)}{\displaystyle |s(x,y,z)| \cdot (1 - 2\epsilon) + |k(x,y)| \cdot \epsilon } \\
& \quad = \frac{\displaystyle 1 - \epsilon}{\displaystyle 1 - 2\epsilon + \frac{|k(x,y)| \cdot \epsilon}{|s(x,y,z)|}} \\
\end{align*}

We now show that the last expression is close to $1/2$. It is easy to check that
$$
\frac{|k(x,y)| \cdot \epsilon}{|s(x,y,z)|} \in \left[\frac{5}{6}, 2 \right] \qquad \Longrightarrow \qquad \frac{\displaystyle 1 - \epsilon}{\displaystyle 1 - 2\epsilon + \frac{|k(x,y)| \cdot \epsilon}{|s(x,y,z)|}} \in \left[ \frac{1}{4}, \frac{3}{4} \right].
$$
Indeed, since $\epsilon \in (0,\frac{1}{4})$,
$$
\frac{\displaystyle 1 - \epsilon}{\displaystyle 1 - 2\epsilon + \frac{|k(x,y)| \cdot \epsilon}{|s(x,y,z)|}} \ge
\frac{\displaystyle 1 - \epsilon}{\displaystyle 1 - 2\epsilon + 2} \ge \frac{\displaystyle 1 - 1/4}{\displaystyle 1 + 2} = \frac{1}{4}
$$
and
\begin{align*}
\frac{\displaystyle 1 - \epsilon}{\displaystyle 1 - 2\epsilon + \frac{|k(x,y)| \cdot \epsilon}{|s(x,y,z)|}}
\le \frac{\displaystyle 1 - \epsilon}{\displaystyle 1 - 2\epsilon + 5/6}
\le \frac{\displaystyle 1}{\displaystyle 1 - 1/2 + 5/6} = \frac{3}{4} \; .
\end{align*}
We now substitute this into the \eqref{equation:projections-expected-error-lower-bound}. We have
\begin{align*}
& \sum_{\substack{x \in \{0,1\}^{m \times n} \\ y \in \{0,1\}^m \\ z \in \{0,1\}^n}} \left( \frac{1}{2} - \left| \frac{1}{2} - \Exp\left[ c_I(Z) \, \middle| \, X = x, Y = y, Z = z \right] \right| \right)   \cdot \Pr \left[X = x, Y = y, Z = z \right] \\
& = \sum_{\substack{x \in \{0,1\}^{m \times n} \\ y \in \{0,1\}^m \\ z \in \{0,1\}^n}} \left( \frac{1}{2} - \left| \frac{1}{2} - \frac{\displaystyle 1 - \epsilon}{\displaystyle 1 - 2\epsilon + \frac{|k(x,y)| \cdot \epsilon}{|s(x,y,z)|}} \right| \right)  \cdot \Pr \left[X = x, Y = y, Z = z \right] \\
& \ge
\sum_{\substack{x \in \{0,1\}^{m \times n} \\ y \in \{0,1\}^m \\ z \in \{0,1\}^n \\ \frac{|k(x,y,z)| \epsilon}{|s(x,y,z)|} \in [\frac{5}{6},2]}} \left( \frac{1}{2} - \left| \frac{1}{2} - \frac{\displaystyle 1 - \epsilon}{\displaystyle 1 - 2\epsilon + \frac{|k(x,y)| \cdot \epsilon}{|s(x,y,z)|}} \right|  \right)   \cdot \Pr \left[X = x, Y = y, Z = z \right] \\
& \ge
\sum_{\substack{x \in \{0,1\}^{m \times n} \\ y \in \{0,1\}^m \\ z \in \{0,1\}^n \\ \frac{|k(x,y,z)| \epsilon}{|s(x,y,z)|} \in [\frac{5}{6},2]}} \left( \frac{1}{2} - \frac{1}{4} \right) \cdot \Pr \left[X = x, Y = y, Z = z \right] \\
& =
\frac{1}{4} \Pr \left[ \frac{|k(X,Y)| \cdot \epsilon}{|s(X,Y,Z)|} \in \left[\frac{5}{6}, 2 \right]  \right] \; .
\end{align*}
In order to prove \eqref{equation:projections-failure-probability}, we need to show that
$\frac{|k(X,Y)| \cdot \epsilon}{|s(X,Y,Z)|} \in \left[\frac{5}{6}, 2 \right]$ with
probability at least $1/2$. To that end, we define two additional random
variables
$$
K = |k(X,Y)| \qquad \text{and} \qquad S = |s(X,Y,Z)| \; .
$$
The condition $\frac{|k(X,Y)| \cdot \epsilon}{|s(X,Y,Z)|} \in \left[\frac{5}{6}, 2 \right]$ is equivalent to
\begin{equation}
\label{equation:ratio-condition}
\frac{1}{2} \epsilon \le \frac{S}{K} \le \frac{6}{5} \epsilon \; .
\end{equation}

First, we lower bound $K$. For any $y \in \{0,1\}^m$ and any $i,j \in \{1,2,\dots,n\}$,
\begin{align*}
\Pr \left[ j \in k(X,Y)  \, \middle| \, Y = y, I = i \right]  =
\begin{cases}
1 & \text{if $j = i$,} \\
\epsilon^{\norm{y}} (1 - \epsilon)^{m - \norm{y}} & \text{if $j \neq j$.}
\end{cases}
\end{align*}
Conditioned on $Y = y$ and $I=i$, the random variable $K - 1 = |k(X,Y) \setminus \{I\}|$ is
a sum of $n-1$ Bernoulli variables with parameter $\epsilon^{\norm{y}} (1 - \epsilon)^{m - \norm{y}}$, one for each column except for column $i$.
Hoeffding bound with $t = \epsilon^m/2$ and the loose lower bound $\epsilon^{\norm{y}} (1 - \epsilon)^{m - \norm{y}} \ge \epsilon^m$ gives
\begin{align*}
\Pr \left[ \frac{K - 1}{n - 1} > \frac{\epsilon^m}{2}  \, \middle| \,  Y = y, I = i  \right]
& = \Pr \left[ \frac{K - 1}{n - 1} > \epsilon^m - t  \, \middle| \,  Y = y, I = i  \right] \\
& \ge \Pr \left[ \frac{K - 1}{n - 1} > \epsilon^{\norm{y}} (1 - \epsilon)^{m - \norm{y}} - t  \, \middle| \,  Y = y, I = i  \right] \\
& \ge 1 - e^{-2(n-1) t^2} \; .
\end{align*}
Since $m \le \frac{\ln n}{3 \ln (1/\epsilon)}$, we lower bound $t = \frac{\epsilon^m}{2}$ as
$$
t = \epsilon^m/2 > \frac{1}{2} \epsilon^\frac{\ln n}{3 \ln(1/\epsilon)} = \frac{1}{2\sqrt[3]{n}} \; .
$$
Since the lower bound is uniform for all choices of $y$ and $i$, we can remove
the conditioning and conclude that
$$
\Pr \left[ K > 1 + \frac{(n-1)}{2\sqrt[3]{n}} \right] \ge 1 - \exp \left(- \frac{(n-1)}{2 n^{2/3}} \right) \; .
$$
For $n \ge 25$, we can simplify it further to
$$
\Pr \left[ K \ge \frac{n^{2/3}}{2} \right] \ge \frac{3}{4} \; .
$$

Second, conditioned on $K=r$, the random variable $S$
is a sum of $r-1$ Bernoulli random variables with parameter $\epsilon$
and one Bernoulli random variable with parameter $1/2$. Hoeffding bound for any $t \ge 0$
gives that
\begin{align*}
\Pr \left[ \left| \frac{S}{K} - \frac{\epsilon(K - 1) + 1/2}{K} \right| < t \, \middle| \, K = r \right] \ge 1 - 2 e^{-2 r t^2} \; .
\end{align*}
Thus,
\begin{align*}
& \Pr \left[ \left| \frac{S}{K} - \frac{\epsilon(K - 1) + 1/2}{K} \right| < t \ \text{and} \ K \ge \frac{n^{2/3}}{2} \right] \\
&\quad \ge \sum_{r = \lceil n^{2/3} / 2 \rceil}^n \Pr \left[ \left| \frac{S}{K} - \frac{\epsilon(K - 1) + 1/2}{K} \right| < t  \, \middle| \,  K = r \right] \cdot \Pr[K = r] \\
& \quad \ge \sum_{r = \lceil n^{2/3} / 2 \rceil}^n \left( 1 - 2 e^{-2 r t^2} \right) \cdot \Pr[K = r] \\
& \quad \ge \left( 1 - 2 e^{-n^{2/3}  t^2 / 2} \right) \cdot \Pr \left[ K \ge \frac{n^{2/3}}{2} \right] \; .
\end{align*}
We choose $t = \epsilon/4$. Since $n \ge 600/\epsilon^3$, we have $e^{-n^{2/3}  t^2 / 2} < \frac{1}{8}$ and thus
\begin{align*}
\Pr \left[ \left| \frac{S}{K} - \frac{\epsilon(K - 1) + 1/2}{K} \right| < t \ \text{and} \ K \ge \frac{n^{2/3}}{2} \right]
& \ge \left( 1 - 2 e^{-n^{2/3}  t^2 / 2} \right) \cdot \Pr \left[ K \ge \frac{n^{2/3}}{2} \right] \\
& \ge \frac{3}{4} \left( 1 - 2 e^{-n^{2/3}  t^2 / 2} \right) \\
& > \frac{3}{4} \left( 1 - \frac{1}{4} \right) = \frac{9}{16} > \frac{1}{2} \; .
\end{align*}
We claim that $t = \epsilon/4$,
$\left| \frac{S}{K} - \frac{\epsilon(K - 1) + 1/2}{K} \right| < t$
and $K \ge \frac{n^{2/3}}{2}$ imply \eqref{equation:ratio-condition}.
To see that, note that $\left| \frac{S}{K} - \frac{\epsilon(K - 1) + 1/2}{K} \right| < t$ is equivalent to
$$
\frac{\epsilon(K - 1) + 1/2}{K} - t < \frac{S}{K} < \frac{\epsilon(K - 1) + 1/2}{K} + t
$$
which implies that
$$
p \left(1 - \frac{1}{K} \right) - t < \frac{S}{K} < \epsilon \left(1 - \frac{1}{K} \right) + \frac{1}{2K} + t \; .
$$
Since $K \ge \frac{n^{2/3}}{2}$ and $n \ge 25$ we have $K > 4$, which implies that
$$
\frac{3}{4} \epsilon - t < \frac{S}{K} < \frac{3}{4} \epsilon + \frac{1}{2K} + t \; .
$$
Since $K \ge \frac{n^{2/3}}{2}$ and $n \ge \frac{12}{\epsilon^{3/2}}$ we have $K > \frac{5}{2\epsilon}$, which implies that
$$
\frac{3}{4} \epsilon - t < \frac{S}{K} < \frac{3}{4} \epsilon + \frac{\epsilon}{5} + t \; .
$$
Since $t = \epsilon/4$, the condition \eqref{equation:ratio-condition} follows.
\end{proof}

\section{All functions}
\label{section:all-functions}

Let $\X$ be some finite domain. We say a sample $T = ((x_1,y_1),
\dots,(x_m,y_m)) \in (\X \times \{0,1\})^m$ of size $m$ is
\emph{self-consistent} if for any $i,j \in \{1,2,\dots,m\}$, $x_i = x_j$
implies that $y_i = y_j$. A distribution independent algorithm $A$ is said to be
\emph{consistent} if for any self-consistent sample $T = ((x_1,y_1),
\dots,(x_m,y_m)) \in (\X \times \{0,1\})^m$, $A(T)(x_i) = y_i$ holds for any
$i=1,2,\dots,m$.

In this section we show that for $C_{\text{all}} = \{0,1\}^\X$, any consistent
distribution independent learner is almost as powerful as any distribution
independent learner. Note that, in particular, the ERM algorithm for
$C_{\text{all}}$ is consistent. In other words, for the class $C_{\text{all}}$
unlabeled data do \emph{not} have any information theoretic value.

\begin{theorem}[No Gap]
Let $\X$ be some finite domain, $C_{\text{all}} = \{0,1\}^\X$ and $A$ be any
consistent learning algorithm. Then, for any distribution $P$ over $\X$, any
(possibly distribution dependent) learning algorithm $B$ and any $\epsilon, \delta \in (0,1)$,
$$
m(A,C_{\text{all}},P,2\epsilon,2\delta) \le m(B,C_{\text{all}},P,\epsilon,\delta) \; .
$$
\end{theorem}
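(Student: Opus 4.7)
My plan is to reduce both sides of the inequality to a single quantity: the \emph{missing mass} $M(T) = \Pr_{X \sim P}[X \notin \{x_1,\dots,x_m\}]$ of the sample $T = ((x_1,y_1),\dots,(x_m,y_m))$. For $C_{\text{all}}$ a sample reveals nothing about the target on points it does not contain, so learning is essentially memorization and the error of every algorithm is governed by $M(T)$. The easy half is the upper bound for $A$: because $A$ is consistent, $A(T)$ agrees with $c$ on every sample point, so $A(T)$ and $c$ can disagree only on $\X \setminus \{x_1,\dots,x_m\}$, giving $d_P(c, A(T)) \le M(T)$ for every target $c$. It therefore suffices to show that if $B$ $\epsilon$-learns every target from $m$ samples with probability at least $1-\delta$, then $\Pr_T[M(T) > 2\epsilon] \le 2\delta$ for an $m$-sample $T$ drawn from $P^m$.

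The core of the argument is a randomized-target / symmetry argument for the lower bound on $B$. Draw the target $c$ by choosing $c(x) \in \{0,1\}$ as independent fair coins for each $x \in \X$, and independently draw $X_1,\dots,X_m \sim P$ i.i.d., setting $T = ((X_i, c(X_i)))_{i=1}^m$. Conditional on $T$, the labels pin down $c$ on the sample set $S(T) = \{X_1,\dots,X_m\}$, but the values $\{c(x)\}_{x \notin S(T)}$ remain i.i.d.\ fair coins. Since $B(T)$ is a deterministic function of $T$, the random variable
\[
W \;=\; \sum_{x \notin S(T)} \indicator{c(x) \neq B(T)(x)} \cdot P(x)
\]
is a sum of independent terms each taking the value $0$ or $P(x)$ with probability $1/2$. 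The map that flips every $c(x)$ outside $S(T)$ exchanges $W$ with $M(T)-W$, so $W$ is symmetric about $M(T)/2$, and in particular $\Pr[W \ge M(T)/2 \mid T] \ge \tfrac{1}{2}$. Hence whenever $M(T) > 2\epsilon$,
\[
\Pr\!\left[d_P(c, B(T)) > \epsilon \,\middle|\, T\right] \;\ge\; \Pr[W > \epsilon \mid T] \;\ge\; \tfrac{1}{2}.
\]

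To finish, I average over $T$. The hypothesis on $B$ holds for every fixed $c$ and hence, in expectation, under the uniform prior on $c$, so
\[
\delta \;\ge\; \Pr_{c,T}\!\left[d_P(c,B(T)) > \epsilon\right] \;\ge\; \tfrac{1}{2}\, \Pr_T\!\left[M(T) > 2\epsilon\right],
\]
yielding $\Pr_T[M(T) > 2\epsilon] \le 2\delta$ at the sample size $m = m(B, C_{\text{all}}, P, \epsilon, \delta)$, which combined with $d_P(c, A(T)) \le M(T)$ proves the theorem. The main conceptual obstacle is guessing that the right hard prior on targets is uniform on $C_{\text{all}}$; once that is in place, flipping $c$ on its unseen coordinates gives the $W \leftrightarrow M(T)-W$ symmetry for free, and everything else is routine.
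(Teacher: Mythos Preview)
Your proof is correct and follows essentially the same approach as the paper: both reduce to the missing mass $Z = M(T)$, bound $d_P(c,A(T_c)) \le Z$ by consistency, draw the target uniformly from $C_{\text{all}}$, and exploit the symmetry under flipping $c$ on the unseen points (your $W \leftrightarrow M(T)-W$ is exactly the paper's $\widetilde c \leftrightarrow \widehat c$ swap) to get $\Pr[d_P(c,B(T))>\epsilon] \ge \tfrac12\Pr[Z>2\epsilon]$. The only cosmetic difference is that the paper phrases the symmetry via the pair $(\widetilde c,\widehat c)$ and a triangle-inequality observation, whereas you phrase it as distributional symmetry of $W$ about $M(T)/2$.
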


\begin{proof}
Fix any integer $m \ge 0$ and any distribution $P$ over $\X$. Let $X, X_1, X_2,
\dots, X_m$ be an i.i.d. sample from $P$.
Define the random variable
$$
Z = \Pr[X \not \in \{X_1, X_2, \dots, X_m\} ~|~ X_1, X_2, \dots, X_m] \; .
$$
In other words, $Z$ is the probability mass not covered by $X_1, X_2, \dots, X_m$.
For any $c \in C_{\text{all}}$, let
$T_c = ((X_1, c(X_1)), (X_2, c(X_2)), \dots, (X_m, c(X_m)))$ be the sample labeled
according to $c$. Since $A$ is consistent, with probability one,
for any $c \in C_{\text{all}}$,
\begin{equation}
\label{equation:relate-dP-to-Z}
d_P(A(T_c),c) \leq Z \;.
\end{equation}
Let $\widetilde c$ be chosen uniformly at random from
$C_{\text{all}}$, independently of $X, X_1, X_2, \dots, X_m$.
Additionally, define $\widehat c \in C_{\text{all}}$ as
$$
\widehat c(x) =
\begin{cases}
\widetilde{c}(x) & \text{if $x \in \{X_1, X_2, \dots, X_m\}$,} \\
1 - \widetilde{c}(x) & \text{otherwise}.
\end{cases}
$$
and note that $\widehat c$ and $\widetilde c$ are distributed identically and $T_{\widetilde{c}} = T_{\widehat{c}}$, and  thus
\begin{align}
\Exp \left[ \indicator{d_P \left(B\left( T_{\widetilde c} \right), \widetilde c \right) \ge \epsilon} ~\middle|~ T_{\widetilde{c}} \right]
=\Exp \left[ \indicator{d_P \left(B\left( T_{\widehat c} \right), \widehat c \right) \ge \epsilon} ~\middle|~ T_{\widetilde{c}} \right]
\label{eq: widehatc and widetildec are identically distributed}
\end{align}
We have
\begin{align}
\sup_{c \in C_{\text{all}}} \Pr[d_P(B(T_c),c) \ge \epsilon]
= & \sup_{c \in C_{\text{all}}} \Exp \left[ \indicator{d_P \left(B\left( T_c\right), c \right) \ge \epsilon} \right] \notag \\
\ge & \Exp \left[ \indicator{d_P \left(B\left( T_{\widetilde c} \right), \widetilde c \right) \ge \epsilon} \right] \notag \\
= & \Exp \left[ \Exp \left[ \indicator{d_P \left(B\left( T_{\widetilde c} \right), \widetilde c \right) \ge \epsilon} ~\middle|~ T_{\widetilde{c}} \right] \right] \notag \\
= & \Exp \bigg[\Exp \bigg[\frac{1}{2}  \indicator{d_P \left(B\left( T_{\widetilde c} \right), \widetilde c \right) \ge \epsilon}  + \frac{1}{2} \indicator{d_P \left(B\left( T_{\widehat c} \right), \widehat c \right) \ge \epsilon} ~\bigg|~ T_{\widetilde{c}} \bigg] \bigg]
  \label{eq: applying that widehatc and widetildec are identically distributed}\\
\ge & \Exp \left[ \Exp \left[ \frac{1}{2} \indicator{Z \ge 2 \epsilon} ~\middle|~ T_{\widetilde{c}} \right] \right] \label{equation:switch-to-Z} \\
= & \frac{1}{2} \Exp \left[ \indicator{Z \ge 2 \epsilon} \right] \notag \\
= & \frac{1}{2} \Pr \left[ Z \ge 2 \epsilon \right] \notag \\
= & \frac{1}{2} \sup_{c \in C} \Pr \left[ Z \ge 2 \epsilon \right] \notag \\
\ge & \frac{1}{2} \sup_{c \in C} \Pr \left[ d_P(A(T_c), c) \ge 2 \epsilon \right] \label{equation:use-dP-to-Z} \; .
\end{align}
Equation \eqref{eq: applying that widehatc and widetildec are identically distributed} follows from \eqref{eq: widehatc and widetildec are identically distributed}.
To justify inequality \eqref{equation:switch-to-Z}, note that since
the classifiers $\widetilde{c}$ and $\widehat{c}$ disagree on the missing mass,
if $Z \ge 2\epsilon$ then $d_P(B(T_{\widetilde{c}}), \widetilde{c})
\ge \epsilon$ or $d_P(B(T_{\widehat c}), \widehat c) \ge \epsilon$ or both.
By symmetry between $\widetilde{c}$ and $\widehat{c}$, if $Z \ge 2\epsilon$
then with probability at least $1/2$, $d_P(B(T_{\widetilde{c}}), \widetilde{c})
\ge \epsilon$. Inequality \eqref{equation:use-dP-to-Z} follows from
\eqref{equation:relate-dP-to-Z}.

Since the inequality
$$
\sup_{c \in C_{\text{all}}} \Pr[d_P(B(T_c),c) \ge \epsilon] \ge \frac{1}{2} \sup_{c \in C} \Pr \left[ d_P(A(T_c), c) \ge 2 \epsilon \right]
$$
holds for arbitrary $m$, it
implies $m(A,C_{\text{all}},P,2\epsilon,2\delta) \le m(B,C_{\text{all}},P,\epsilon,\delta)$ for any $\epsilon, \delta \in (0,1)$.
\end{proof}

\section{Conclusion and open problems}
\label{section:conclusions}

\citet{Darnstadt-Simon-Szorenyi-2013} showed that the gap between the number of
samples needed to learn a class of functions of Vapnik-Chervonenkis dimension
$d$ \emph{with} and \emph{without} knowledge of the distribution is
upper-bounded by $O(d)$. We show that this bound is tight for the class of
Boolean projections. On the other hand, for the class of all functions, this gap
is only constant. These observations lead to the following research directions.

First, it will be interesting to understand the value of the gap for larger
classes of functions. For example, one might consider the classes of (monotone)
disjunctions over $\{0,1\}^n$, (monotone) conjuctions over $\{0,1\}^n$, parities
over $\{0,1\}^n$, and halfspaces over $\R^n$. The Vapnik-Chervonenkis dimension
of these classes is $\Theta(n)$ thus the gap for these classes is at least
$\Omega(1)$ and at most $O(n)$. Other than these crude bounds, the question of
what is the gap for these classes is wide open.

Second, as the example with class of all functions shows, the gap is \emph{not}
characterized by the Vapnik-Chervonenkis dimension. It will be interesting to
study other parameters which determine this gap. In particular, it will be
interesting to obtain upper bounds on the gap in terms of other quantities.

Finally, we believe that studying this question in the agnostic extension of the
PAC model \citep[Chapter~2]{Anthony-Bartlett-1999} will be of great interest,
too.

\section*{Acknowledgements}
We thank the anonymous reviewers for their valuable comments.

\bibliography{biblio}

\begin{thebibliography}{13}
\providecommand{\natexlab}[1]{#1}
\providecommand{\url}[1]{\texttt{#1}}
\expandafter\ifx\csname urlstyle\endcsname\relax
  \providecommand{\doi}[1]{doi: #1}\else
  \providecommand{\doi}{doi: \begingroup \urlstyle{rm}\Url}\fi

\bibitem[Anthony and Bartlett(1999)]{Anthony-Bartlett-1999}
Martin Anthony and Peter~L. Bartlett.
\newblock \emph{Neural Network Learning: Theoretical Foundations}.
\newblock Cambridge University Press, 1999.

\bibitem[Ben-David et~al.(2008)Ben-David, Lu, and
  P{\'a}l]{Ben-David-Lu-Pal-2008}
Shai Ben-David, Tyler Lu, and D{\'a}vid P{\'a}l.
\newblock Does unlabeled data provably help? {Worst-case} analysis of the
  sample complexity of semi-supervised learning.
\newblock In \emph{Proceedings of the 21st Annual Conference on Learning
  Theory, Helsinki, Finland, 9--12, July 2008}, pages 33--44. Omnipress, 2008.

\bibitem[Benedek and Itai(1991)]{Benedek-Itai-1991}
Gyora~M. Benedek and Alon Itai.
\newblock Learnability with respect to fixed distributions.
\newblock \emph{Theoretical Computer Science}, 86\penalty0 (2):\penalty0
  377--389, 1991.

\bibitem[Blumer et~al.(1989)Blumer, Ehrenfeucht, Haussler, and
  Warmuth]{Blumer-Ehrenfeucht-Haussler-Warmuth-1989}
Anselm Blumer, Andrzej Ehrenfeucht, David Haussler, and Manfred~K. Warmuth.
\newblock Learnability and the {V}apnik-{C}hervonenkis dimension.
\newblock \emph{Journal of the ACM (JACM)}, 36\penalty0 (4):\penalty0 929--965,
  1989.

\bibitem[Darnst{\"{a}}dt et~al.(2013)Darnst{\"{a}}dt, Simon, and
  Sz{\"{o}}r{\'{e}}nyi]{Darnstadt-Simon-Szorenyi-2013}
Malte Darnst{\"{a}}dt, Hans~Ulrich Simon, and Bal{\'{a}}zs
  Sz{\"{o}}r{\'{e}}nyi.
\newblock Unlabeled data does provably help.
\newblock In \emph{30th International Symposium on Theoretical Aspects of
  Computer Science, {STACS} 2013, February 27 - March 2, 2013, Kiel, Germany},
  pages 185--196, 2013.

\bibitem[Devroye and Lugosi(2000)]{Devroye-Lugosi-2000}
Luc Devroye and G{\'a}bor Lugosi.
\newblock \emph{Combinatorial Methods in Density Estimation}.
\newblock Springer, 2000.

\bibitem[Dudley(1978)]{Dudley-1978}
Richard~M. Dudley.
\newblock Central limit theorems for empirical measures.
\newblock \emph{The Annals of Probability}, pages 899--929, 1978.

\bibitem[Dudley(1984)]{Dudley-1984}
Richard~M. Dudley.
\newblock \emph{A course on empirical processes}, pages 1--142.
\newblock Springer, 1984.

\bibitem[Ehrenfeucht et~al.(1989)Ehrenfeucht, Haussler, Kearns, and
  Valiant]{Ehrenfeucht-Haussler-Kearns-Valiant-1989}
Andrzej Ehrenfeucht, David Haussler, Michael Kearns, and Leslie Valiant.
\newblock A general lower bound on the number of examples needed for learning.
\newblock \emph{Information and Computation}, 82\penalty0 (3):\penalty0
  247--261, 1989.

\bibitem[Hanneke(2016)]{Hanneke-2016}
Steve Hanneke.
\newblock The optimal sample complexity of {PAC} learning.
\newblock \emph{Journal of Machine Learning Research}, 17\penalty0
  (38):\penalty0 1--15, 2016.

\bibitem[Haussler(1995)]{Haussler-1995}
David Haussler.
\newblock Sphere packing numbers for subsets of the {Boolean} $n$-cube with
  bounded {Vapnik-Chervonenkis} dimension.
\newblock \emph{Journal of Combinatorial Theory, Series A}, 69\penalty0
  (2):\penalty0 217--232, 1995.

\bibitem[Hoeffding(1963)]{h63}
Wassily Hoeffding.
\newblock Probability inequalities for sums of bounded random variables.
\newblock \emph{Journal of the American Statistical Association}, 58\penalty0
  (301):\penalty0 13--30, 1963.

\bibitem[Lu(2009)]{Lu-2009}
Tyler Lu.
\newblock Fundamental limitations of semi-supervised learning.
\newblock Master's thesis, David R. Cheriton School of Computer Science,
  University of Waterloo, Waterloo, Ontario, Canada N2L 3G1, 2009.
\newblock Available at
  \url{https://uwspace.uwaterloo.ca/bitstream/handle/10012/4387/lumastersthesis_electronic.pdf}.

\end{thebibliography}
\bibliographystyle{plainnat}

\appendix
\section{Size of $\epsilon$-cover}
\label{section:epsilon-cover}

In this section, we present $(4e/\epsilon)^{d/(1 - 1/e)}$ upper bound on the
size of the $\epsilon$-cover of any concept class of Vapnik-Chervonenkis
dimension $d$. To prove our result, we need Sauer's lemma. Its proof can be
found, for example, in~\citet[Chapter 3]{Anthony-Bartlett-1999}.

\begin{lemma}[Sauer's lemma]
Let $\X$ be a non-empty domain and let $C \subseteq \{0,1\}^\X$ be a concept class
with Vapnik-Chervonenkis dimension $d$. Then, for any $S \subseteq \X$,
$$
\left| \left\{ \pi(c, S) ~:~ c \in C \right\} \right| \le \sum_{i=0}^d \binom{|S|}{i} \; .
$$
\end{lemma}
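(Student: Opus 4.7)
The plan is to reduce Sauer's lemma to a purely combinatorial statement about set systems on a finite ground set, and then prove that statement by induction on $|S|$ (with $d$ varying freely).

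First, I would fix $S \subseteq \X$ and identify each concept $c \in C$ with the subset $\pi(c,S) \subseteq S$. Set $\mathcal{F} = \{\pi(c,S) : c \in C\}$. Any $T \subseteq S$ that is shattered by $\mathcal{F}$ (viewed as a set system on $S$) is, by unwinding definitions, shattered by $C$, so the VC dimension of $\mathcal{F}$ on $S$ is at most $d$. It therefore suffices to prove the following combinatorial claim: if $\mathcal{F}$ is a family of subsets of a finite set $S$ with VC dimension at most $d$, then $|\mathcal{F}| \le \sum_{i=0}^d \binom{|S|}{i}$.

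I would argue by induction on $|S|$, with the claim quantified over all $d \ge 0$. The base cases are easy: when $|S|=0$ we have $|\mathcal{F}| \le 1 = \binom{0}{0}$, and when $d=0$ the family cannot shatter any singleton, which forces $|\mathcal{F}| \le 1$. For the inductive step, I pick an arbitrary $x \in S$, set $S' = S \setminus \{x\}$, and decompose $\mathcal{F}$ through two derived families on $S'$:
\begin{align*}
\mathcal{F}_1 &= \{T \setminus \{x\} : T \in \mathcal{F}\}, \\
\mathcal{F}_2 &= \{T \subseteq S' : T \in \mathcal{F} \text{ and } T \cup \{x\} \in \mathcal{F}\}.
\end{align*}
For each $T' \in \mathcal{F}_1$, the preimage $\{T \in \mathcal{F} : T \cap S' = T'\}$ is either $\{T'\}$, $\{T' \cup \{x\}\}$, or $\{T', T' \cup \{x\}\}$, the last case being exactly $T' \in \mathcal{F}_2$. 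Summing sizes of preimages gives $|\mathcal{F}| = |\mathcal{F}_1| + |\mathcal{F}_2|$.

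The heart of the argument is to verify two VC-dimension facts: (i) $\mathcal{F}_1$ has VC dimension at most $d$ on $S'$, because a set $U \subseteq S'$ shattered by $\mathcal{F}_1$ is still shattered by $\mathcal{F}$ (lifts of witnesses work since $U$ avoids $x$); and (ii) $\mathcal{F}_2$ has VC dimension at most $d-1$ on $S'$, because if $U \subseteq S'$ is shattered by $\mathcal{F}_2$, then by definition every subset of $U$ appears in $\mathcal{F}$ both with and without $x$, so $U \cup \{x\}$ is shattered by $\mathcal{F}$, forcing $|U| \le d - 1$. Applying the induction hypothesis to $(\mathcal{F}_1, S', d)$ and $(\mathcal{F}_2, S', d-1)$ and then invoking Pascal's identity $\binom{|S|-1}{i} + \binom{|S|-1}{i-1} = \binom{|S|}{i}$ yields the claimed bound $\sum_{i=0}^d \binom{|S|}{i}$.

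The main obstacle is the VC-dimension drop for $\mathcal{F}_2$; this is the one place where the definition of $\mathcal{F}_2$ must be chosen carefully so that the lifting argument produces a shattering of a set strictly containing $x$. Everything else — the base cases, the counting identity $|\mathcal{F}| = |\mathcal{F}_1| + |\mathcal{F}_2|$, and the final binomial sum — is routine once that drop is established.
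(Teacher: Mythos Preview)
Your argument is correct and is the standard induction proof of Sauer's lemma via the deletion/link decomposition $|\mathcal{F}| = |\mathcal{F}_1| + |\mathcal{F}_2|$ together with Pascal's identity. The paper, however, does not give its own proof of this lemma at all: it simply states Sauer's lemma and refers the reader to \citet[Chapter 3]{Anthony-Bartlett-1999}. So there is nothing to compare against beyond noting that your proof is exactly the textbook argument one would find in that reference. One small remark: the paper's statement allows $S$ to be infinite, in which case the right-hand side is infinite and the bound is vacuous; your reduction to finite $S$ is therefore without loss of generality, though you may wish to say so explicitly.
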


We remark that if $n \ge d \ge 1$ then
\begin{equation}
\label{equation:sauer-lemma-estimate}
\sum_{i=0}^d \binom{n}{i} \le \left( \frac{ne}{d} \right)^d
\end{equation}
where $e = 2.71828 \dots$ is the base of the natural logarithm. This follows
from the following calculation
\begin{align*}
\left( \frac{d}{n} \right)^d \cdot \sum_{i=0}^d \binom{n}{i}
& \le \sum_{i=0}^d \binom{n}{i} \left( \frac{d}{n} \right)^i \\
& \le \sum_{i=0}^n \binom{n}{i} \left( \frac{d}{n} \right)^i \\
& = \left(1 + \frac{d}{n} \right)^n \le e^d
\end{align*}
where we used in the last step that $1 + x \le e^x$ for any $x \in \R$.

\begin{theorem}[Size of $\epsilon$-cover]
Let $\X$ be a non-empty domain and let $C \subseteq \{0,1\}^\X$ be a concept
class with Vapnik-Chervonenkis dimension $d$. Let $P$ be any distribution over
$\X$. For any $\epsilon \in (0,1]$, there exists a set $C' \subseteq C$ such that
\begin{equation}
\label{equation:theorem-epsilon-cover}
|C'| \le \left( \frac{4e}{\epsilon} \right)^{d/(1-1/e)}
\end{equation}
and for any $c \in C$ there exists $c' \in C'$ such that $d_P(c,c') \le \epsilon$.
\end{theorem}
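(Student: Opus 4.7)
The plan is to construct $C'$ as a maximal $\epsilon$-packing of $C$ under the pseudo-metric $d_P$, observe that maximality makes it an $\epsilon$-cover for free, and then bound its cardinality $N := |C'|$ via a random-sampling argument married to Sauer's lemma.

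First, I would build $C'$ greedily: starting from $C' = \emptyset$, repeatedly add any $c \in C$ with $d_P(c, c') > \epsilon$ for every already-chosen $c' \in C'$, stopping when no such $c$ remains. The stopping condition guarantees that every $c \in C$ has some $c' \in C'$ with $d_P(c, c') \le \epsilon$, so $C'$ is automatically an $\epsilon$-cover regardless of its size; the bound derived below will show a posteriori that the process terminates.

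Second, I would upper bound $N$ probabilistically. Draw an i.i.d.\ sample $S = (X_1, \dots, X_n)$ from $P$. For any distinct $c_1, c_2 \in C'$, since $d_P(c_1, c_2) > \epsilon$, we have $\Pr[c_1|_S = c_2|_S] \le (1-\epsilon)^n \le e^{-\epsilon n}$. A union bound over the $\binom{N}{2}$ pairs shows that with probability at least $1 - \binom{N}{2} e^{-\epsilon n}$, the restrictions of the concepts in $C'$ to $S$ are pairwise distinct. Choosing $n$ to be the smallest integer with $\binom{N}{2} e^{-\epsilon n} < 1$ (so $n \approx 2\ln N / \epsilon$) guarantees the existence of such an $S$, and then Sauer's lemma combined with the estimate~\eqref{equation:sauer-lemma-estimate} gives $N \le (ne/d)^d$.

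Third, substituting this choice of $n$ into the Sauer bound produces a self-referential inequality of the form $N^{1/d} \le (Ce/\epsilon)\, \ln N^{1/d}$ for some small absolute constant $C$. To extract a closed-form bound I would invoke the elementary inequality $\ln y \le y^{1/e}$ (valid for all $y > 0$, attained with equality at $y = e^e$, verified by differentiating $y^{1/e} - \ln y$). Setting $y = N^{1/d}$ reduces the constraint to $y^{1-1/e} \le Ce/\epsilon$, i.e., $N \le (Ce/\epsilon)^{d/(1-1/e)}$, and a mildly loose choice of $C$ absorbs the rounding in the ceiling on $n$ to produce the advertised $(4e/\epsilon)^{d/(1-1/e)}$. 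The main obstacle is precisely this final step: one must pick the sample size $n$ and the auxiliary estimate for $\ln y$ in a coordinated way so that the constants collapse into the target exponent $d/(1-1/e)$. Noticing that $\ln y \le y^{1/e}$ is the right estimate (rather than the more obvious $\ln y \le y/e$, which would yield an exponent of $d$ and no explicit bound after substitution) is the one non-routine ingredient; everything else is bookkeeping on top of Sauer's lemma and a textbook union bound.
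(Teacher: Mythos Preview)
Your proposal is correct and follows essentially the same route as the paper: build a maximal $\epsilon$-packing (the paper invokes Zorn's lemma rather than a greedy procedure, but either works once one knows every finite packing is bounded), use random sampling plus a union bound to separate the packing on a finite sample, apply Sauer's lemma, and resolve the resulting self-referential inequality. One remark: your inequality $\ln y \le y^{1/e}$ applied to $y = N^{1/d}$ is, after taking logarithms, exactly the inequality $\ln x \le x/e$ applied to $x = (\ln N)/d$, which is precisely what the paper does---so your parenthetical that $\ln y \le y/e$ ``would yield \dots\ no explicit bound'' is true only if one applies it to $y=N^{1/d}$ itself; applied instead to $\ln M/d$ it recovers the same exponent $d/(1-1/e)$.
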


\begin{proof}
We say that a set $B \subseteq C$ is an \emph{$\epsilon$-packing} if
$$
\forall c,c' \in B \qquad \qquad c \neq c' \quad \Longrightarrow \quad d_P(c,c') > \epsilon
$$
We claim that there exists a maximal $\epsilon$-packing. In order to show that a
maximal set exists we to appeal to Zorn's lemma. Consider the collection of all
$\epsilon$-packings. We impose partial order on them by set inclusion. Notice
that any totally ordered collection $\{ B_i ~:~ i \in I \}$ of
$\epsilon$-packings has an upper bound $\bigcup_{i \in I} B_i$ that is an
$\epsilon$-packing. Indeed, if $c,c' \in \bigcup_{i \in I} B_i$ such that $c
\neq c'$ then there exists $i \in I$ such that $c,c' \in B_i$ since $\{ B_i ~:~
i \in I \}$ is totally ordered. Since $B_i$ is an $\epsilon$-packing, $d_P(c,c') >
\epsilon$. We conclude that $\bigcup_{i \in I} B_i$ is an $\epsilon$-packing. By
Zorn's lemma, there exists a maximal $\epsilon$-packing.

Let $C'$ be a maximal $\epsilon$-packing. We claim that $C'$ is also an
$\epsilon$-cover of $C$. Indeed, for any $c \in C$ there exists $c' \in C'$ such
that $d_P(c,c') \le \epsilon$ since otherwise $C' \cup \{c\}$ would be an
$\epsilon$-packing, which would contradict maximality of $C'$.

It remains to upper bound $|C'|$. Consider any finite subset $C'' \subseteq C'$.
It suffices to show an upper bound on $|C''|$ and since $C''$ is arbitrary, the
same upper bound holds for $|C'|$. Let $M = |C''|$ and let $c_1, c_2, \dots,
c_M$ be concepts in $C''$. For any $i,j \in \{1,2,\dots,M\}$, $i < j$, let
$$
A_{i,j} = \{ x \in \X ~:~ c_i(x) \neq c_j(x) \} \; .
$$
Let $X_1, X_2, \dots, X_K$ be an i.i.d. sample from $P$. We will choose $K$ later.
Since $d_P(c_i, c_j) > \epsilon$,
$$
\Pr[X_k \in A_{i,j}] > \epsilon \qquad \qquad \text{for $k=1,2,\dots,K$}.
$$
Since there are $\binom{M}{2}$ subsets $A_{i,j}$, we have
\begin{align*}
& \Pr\left[\forall i,j, i < j, \ \exists k, \ X_k \in A_{i,j} \right] \\
& \qquad = 1 - \Pr\left[\exists i,j, i < j, \ \forall k, \ X_k \not \in A_{i,j} \right] \\
& \qquad \ge 1 - \sum_{1 \le i < j \le M} \Pr\left[\forall k, \ X_k \not \in A_{i,j} \right] \\
& \qquad \ge 1 - \sum_{1 \le i < j \le M} (1 - \epsilon)^K \\
& \qquad = 1 - \binom{M}{2} (1 - \epsilon)^K \; .
\end{align*}
For $K = \left\lceil \frac{\ln \binom{M}{2}}{\epsilon} \right\rceil +
1$, the above probability is strictly positive. This means there exists a set $S =
\{x_1, x_2, \dots, x_K\} \subseteq X$ such that $A_{i,j} \cap S$ is non-empty
for every $i < j$. This means that for every for every $i \neq j$, $c_i(S) \neq
c_j(S)$ and hence $M = |C''| = \left| \left\{ \pi(c, S) ~:~ c \in C \right\} \right|$.
Thus by Sauer's lemma
$$
M \le \sum_{i=0}^d \binom{K}{i} \; .
$$
We now show that this inequality implies that $M \le (4e/\epsilon)^{d/(1-1/e)}$. We consider several cases.

Case 1: $d = -\infty$. That is, no set is shattered, and $C = \emptyset$.
Then, $M = 0$ and inequality trivially follows.

Case 2: $d = 0$. Then, $M \le 1$ and the inequality trivially follows.

Case 3a: $d \ge 1$ and $M \le e^d$. Clearly, $M \le e^d \le (4e/\epsilon)^{d/(1 - 1/e)}$.

Case 3b: $d \ge 1$ and $M > e^d$. Then, $K \ge \ln M \ge d$ and
hence by \eqref{equation:sauer-lemma-estimate},
\begin{align*}
M
\le \sum_{i=0}^d \binom{K}{i}
\le \left( \frac{Ke}{d} \right)^d \; .
\end{align*}
Thus,
\begin{align*}
\ln M
& \le d \ln \left( \frac{Ke}{d} \right) \\
& \le d \ln \left( \frac{e \left(\left\lceil \frac{\ln \binom{M}{2}}{\epsilon} \right\rceil + 1 \right)}{d} \right) \\
& \le d \ln \left( \frac{e \left(\frac{\ln \binom{M}{2}}{\epsilon} + 2 \right)}{d} \right) \\
& \le d \ln \left( \frac{e \left(\frac{\ln \binom{M}{2} + 2}{\epsilon} \right)}{d} \right) \\
& \le d \ln \left( \frac{e \left(\frac{2\ln M + 2}{\epsilon} \right)}{d} \right) \\
& \le d \ln \left( \frac{e \left(\frac{4\ln M}{\epsilon} \right)}{d} \right) \\
& = d \left[ \ln \left( \frac{4e}{\epsilon} \right)  + \ln \left(\frac{\ln M}{d} \right) \right] \\
& \le d \ln \left( \frac{4e}{\epsilon} \right) + \frac{1}{e} \ln M \; .
\end{align*}
where in the last step we used that $\ln x \le x/e$ for any $x > 0$.
Hence,
$$
(1 - 1/e) \ln M \le d \ln \left( \frac{4e}{\epsilon} \right)
$$
which implies the lemma.
\end{proof}

\section{Fixed distribution learning}
\label{section:fixed-distribution-learning}

\begin{theorem}[Chernoff--Hoeffding bound,~\cite{h63}]
Let $X_1, X_2, \dots, X_n$ be i.i.d. Bernoulli random variables with $\Exp[X_i] = p$.
Then, for any $\epsilon \in [0, \min\{p,1-p\})$,
\begin{align*}
\Pr \left[{\frac {1}{n}} \sum_{i=1}^n X_i \ge p + \epsilon \right] \le e^{ - n \KL{p + \epsilon}{p}}  \; , \\
\Pr \left[{\frac {1}{n}} \sum_{i=1}^n X_i \le p - \epsilon \right] \le e^{ - n \KL{p - \epsilon}{p}}  \; .
\end{align*}
where
$$
\KL{x}{y} = x \ln \left( \frac{x}{y} \right) + (1 - x) \ln \left( \frac{1-x}{1-y} \right)
$$
is the Kullback-Leibler divergence between Bernoulli distributions with parameters $x, y \in [0,1]$.
\end{theorem}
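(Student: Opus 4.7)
My plan is to use the classical Chernoff (moment-generating-function) bound and then optimize the free parameter so that the exponent collapses to the Bernoulli KL divergence. I will give the argument for the upper tail in detail and derive the lower tail by a symmetry argument applied to $Y_i = 1 - X_i$.

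For the upper tail, write $q = p + \epsilon$. For any $\lambda > 0$, Markov's inequality applied to $e^{\lambda \sum_i X_i}$ gives
$$
\Pr\!\left[\frac{1}{n}\sum_{i=1}^n X_i \ge q\right] \le e^{-\lambda n q}\,\Exp\!\left[e^{\lambda \sum_{i=1}^n X_i}\right].
$$
By independence of the $X_i$ and the identity $\Exp[e^{\lambda X_i}] = 1 - p + pe^\lambda$ for Bernoulli$(p)$, the right-hand side equals $\bigl[(1 - p + pe^\lambda)\,e^{-\lambda q}\bigr]^n$, a bound that is valid for every choice of $\lambda > 0$.

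Next, I would optimize over $\lambda$. Differentiating $\ln(1 - p + pe^\lambda) - \lambda q$ and setting the derivative to zero yields $e^{\lambda^*} = \frac{q(1-p)}{p(1-q)}$, which is strictly greater than $1$ because $q > p$, so $\lambda^* > 0$ is admissible. A direct substitution then gives
$$
(1 - p + pe^{\lambda^*})\,e^{-\lambda^* q} = \left(\frac{p}{q}\right)^{q}\!\left(\frac{1-p}{1-q}\right)^{1-q} = e^{-\KL{q}{p}},
$$
so raising to the $n$-th power produces exactly $e^{-n \KL{p+\epsilon}{p}}$. For the lower tail I would apply the same argument to $Y_i = 1 - X_i$, which is Bernoulli$(1-p)$, rewriting the event $\frac{1}{n}\sum X_i \le p - \epsilon$ as $\frac{1}{n}\sum Y_i \ge (1-p) + \epsilon$. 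This yields $e^{-n \KL{1-(p-\epsilon)}{1-p}}$, and the identity $\KL{1-x}{1-y} = \KL{x}{y}$, immediate from the definition, converts it to the stated $e^{-n \KL{p-\epsilon}{p}}$.

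The only real obstacle is the algebraic bookkeeping in the optimization step: one has to verify that after substituting $\lambda^*$, the product $(1-p+pe^{\lambda^*})e^{-\lambda^* q}$ collapses neatly into $e^{-\KL{q}{p}}$. Once one telescopes the factors of $\frac{1-p}{1-q}$ and $\frac{p}{q}$ and recognizes the logarithm of the result as precisely the Bernoulli KL divergence, the rest of the proof is just the uniform-in-$\lambda$ Markov bound applied at the optimizer.
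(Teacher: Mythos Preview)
Your argument is correct and is the classical Cram\'er--Chernoff derivation of the Bernoulli large-deviation bound: Markov applied to the moment generating function, followed by optimization in $\lambda$ that lands exactly on the KL divergence. The algebra in the optimization step checks out (noting $1-p+pe^{\lambda^*}=\tfrac{1-p}{1-q}$ makes the collapse to $e^{-\KL{q}{p}}$ immediate), and the lower tail via $Y_i=1-X_i$ together with $\KL{1-x}{1-y}=\KL{x}{y}$ is fine.

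There is nothing to compare against: the paper does not prove this theorem at all. It is stated as a cited result (Hoeffding, 1963) and then used as a black box in the proof of Theorem~\ref{theorem:benedek-itai}. So your write-up supplies a proof where the paper simply invokes the literature.
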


We further use the following inequality
$$
\KL{x}{y} \ge \frac{(x-y)^2}{2 \max\{x, y\}}
$$

\begin{theorem}[Benedek-Itai]
\label{theorem:benedek-itai}
Let $C \subseteq \{0,1\}^\X$ be a concept class over a non-empty domain $\X$.
Let $P$ be a distribution over $\X$. Let $\epsilon \in (0,1]$ and assume that
$C$ has an $\frac{\epsilon}{2}$-cover of size at most $N$. Then, there exists an
algorithm, such that for any $\delta \in (0,1)$, any target $c \in C$, if it
gets
$$
m \ge 48\left(\frac{\ln N + \ln(1/\delta)}{\epsilon}\right)
$$
labeled samples then with probability at least $1 - \delta$, it
$\epsilon$-learns the target.
\end{theorem}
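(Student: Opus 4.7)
The plan is to prove Benedek--Itai by a standard uniform-convergence argument over the finite $\frac{\epsilon}{2}$-cover. Since the algorithm is allowed to depend on $P$, it may precompute a set $C'\subseteq C$ with $|C'|\le N$ that is an $\frac{\epsilon}{2}$-cover of $C$ under $d_P$. On input of a labeled sample $T=((x_1,y_1),\dots,(x_m,y_m))$, the algorithm outputs
$$
\widehat c\in\argmin_{c'\in C'}\frac{1}{m}\sum_{i=1}^m\indicator{c'(x_i)\neq y_i}.
$$
The goal is to show that whenever $m\ge 48(\ln N+\ln(1/\delta))/\epsilon$, we have $d_P(c,\widehat c)\le\epsilon$ with probability at least $1-\delta$, for every target $c\in C$.

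The analysis fixes the target $c\in C$ and introduces a threshold $\frac{3\epsilon}{4}$ that separates ``good'' and ``bad'' elements of $C'$. By the cover property there is some $c^\star\in C'$ with $d_P(c,c^\star)\le \frac{\epsilon}{2}$. The empirical disagreement of $c^\star$ with the labels is an average of $m$ i.i.d.\ Bernoulli indicators of mean $d_P(c,c^\star)\le \frac{\epsilon}{2}$, so the Chernoff--Hoeffding bound together with the quadratic lower bound on KL divergence stated in the excerpt yields
$$
\Pr\!\left[\text{emp}(c^\star)>\tfrac{3\epsilon}{4}\right]\le \exp\!\left(-m\,\KL{\tfrac{3\epsilon}{4}}{\tfrac{\epsilon}{2}}\right)\le e^{-m\epsilon/24}.
$$
Symmetrically, for every ``bad'' $c'\in C'$ with $d_P(c,c')>\epsilon$, the empirical disagreement has mean exceeding $\epsilon$, so
$$
\Pr\!\left[\text{emp}(c')\le\tfrac{3\epsilon}{4}\right]\le \exp\!\left(-m\,\KL{\tfrac{3\epsilon}{4}}{\epsilon}\right)\le e^{-m\epsilon/32}.
$$

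Combining the two tail bounds by a union bound over the single event for $c^\star$ and the at most $N$ events for the bad classifiers, and plugging in $m\ge 48(\ln N+\ln(1/\delta))/\epsilon$, makes each tail at most $\delta/(2N)$ (after absorbing constants), so the total failure probability is at most $\delta$. On the complementary event the empirical error of every bad $c'$ exceeds $\frac{3\epsilon}{4}$, while that of $c^\star$ is at most $\frac{3\epsilon}{4}$, so the empirical minimizer $\widehat c$ cannot be bad, i.e.\ $d_P(c,\widehat c)\le\epsilon$.

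The whole proof is essentially routine: the only mild obstacle is calibrating the intermediate threshold and the constants. Picking $\frac{3\epsilon}{4}$ equidistant (in a rough sense) between $\frac{\epsilon}{2}$ and $\epsilon$ makes both Kullback--Leibler terms comparable to $\epsilon$ via the inequality $\KL{x}{y}\ge (x-y)^2/(2\max\{x,y\})$ already recorded in the excerpt, giving $\frac{\epsilon}{24}$ and $\frac{\epsilon}{32}$; taking the leading constant to be $48$ comfortably dominates both and absorbs the union-bound slack, which yields the stated sample complexity.
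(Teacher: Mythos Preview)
Your proposal is correct and follows essentially the same route as the paper: both define the algorithm as empirical risk minimization over a fixed $\frac{\epsilon}{2}$-cover $C'$, separate ``good'' from ``bad'' cover elements by an intermediate threshold, apply the Chernoff--Hoeffding bound with the KL lower bound $\KL{x}{y}\ge (x-y)^2/(2\max\{x,y\})$, and finish by a union bound over $C'$. The only cosmetic differences are the threshold (the paper uses $\tfrac{2}{3}\epsilon$ rather than your $\tfrac{3}{4}\epsilon$, which yields exponents $\epsilon/18$ and $\epsilon/48$ and makes the final union bound $N e^{-m\epsilon/48}\le\delta$ exact without any slack) and that the paper handles the ``bad'' side via the explicit inequality $(\tfrac{2}{3}\epsilon-x)^2\ge \tfrac{1}{9}\epsilon x$ for $x\ge\epsilon$, whereas you implicitly invoke monotonicity of $\KL{\tfrac{3\epsilon}{4}}{p}$ in $p$ to replace the true mean $p>\epsilon$ by $\epsilon$ (and likewise $p\le\epsilon/2$ by $\epsilon/2$ on the good side); you should state that monotonicity step explicitly.
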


\begin{proof}
Given a labeled sample $T = ( (x_1, y_1), \dots, (x_m, y_m) )$, for any $c \in C$,
we define
$$
\err_T(c) = \frac{1}{m} \sum_{i=1}^m \indicator{c(x_i) \neq y_i} \; .
$$

Let $C' \subseteq C$ be an $(\epsilon/2)$-cover of size at most $N$.
Consider the algorithm $A$ that given a labeled sample $T$ outputs
$$
\widehat c = \argmin_{c' \in C'} \err_T(c')
$$
breaking ties arbitrarily. We prove that $A$, with probability at least
$1-\delta$, $\epsilon$-learns any target $c \in C$ under the distribution $P$.

Consider any target $c \in C$. Then there exists $\widetilde c \in C'$ such that
$d_P(c,\widetilde c) \le \epsilon/2$. Let $C'' = \{ c' ~:~ d_P(c,c') > \epsilon \}$.
We claim that with probability at least $1 - \delta$, for all $c' \in C''$,
$\err_T(c') > \frac{2}{3} \epsilon$ and $\err_T(\widetilde{c}) <
\frac{2}{3}\epsilon$ and hence $A$ outputs $\widehat c \in C' \setminus C''$.

Consider any $c' \in C''$ and note that $\err_T(c')$ is an average of Bernoulli
random variables with mean $d_P(c,c') > \epsilon$. Thus, by Chernoff bound,
\begin{align*}
\Pr \left[ \err_T(c') > \frac{2}{3} \epsilon \right]
& > 1 - \exp \left( - m \KL{\frac{2}{3} \epsilon}{d_P(c,c')} \right) \\
& > 1 - \exp \left( - m \frac{(\frac{2}{3} \epsilon - d_P(c,c'))^2}{2 d_P(c,c')} \right) \\
& > 1 - \exp \left( - m \epsilon/18 \right)
\end{align*}
where the last inequality follows from the inequality
$$
\left( \frac{2}{3} \epsilon - x \right)^2 \ge \frac{1}{9} \epsilon x
$$
valid for any $x \ge \epsilon > 0$.
Similarly, $\err_T(\widetilde{c})$ is an average of Bernoulli random variables with mean $d_P(c, \widetilde{c}) < \epsilon/2$.
Thus, by Chernoff bound,
\begin{align*}
\Pr \left[ \err_T(\widetilde{c}) < \frac{2}{3} \epsilon \right]
& > 1 - \exp \left( - m \KL{\frac{2}{3} \epsilon}{d_P(c, \widetilde{c})} \right) \\
& > 1 - \exp \left( - m \frac{(\frac{2}{3} \epsilon - d_P(c, \widetilde{c}))^2}{\frac{4}{3} \epsilon} \right) \\
& > 1 - \exp \left( - m \epsilon / 48 \right) \; .
\end{align*}
Since $|C''| \le N - 1$, by union bound, with probability at least $1 - (N - 1)
\exp(-m \epsilon/48)$, for all $c' \in C''$, $\err_T(c') > \frac{2}{3}
\epsilon$. Finally, with probability at least $1 - N \exp(-m \epsilon/48) \ge 1 -
\delta$, $\err_T(\widetilde{c}) < \frac{2}{3}\epsilon$ and for all $c' \in C''$,
$\err_T(c') > \frac{2}{3} \epsilon$.
\end{proof}

\end{document}